%% file: main.tex
\documentclass{article}

\usepackage{microtype}
\usepackage{graphicx}
\usepackage{color}
\usepackage[T1]{fontenc}
\usepackage{subcaption}
\usepackage{booktabs} % for professional tables
\usepackage[normalem]{ulem}

\usepackage{hyperref}

\usepackage[accepted]{icml2026}

\usepackage{amsmath}
\usepackage{amssymb}
\usepackage{mathtools}
\usepackage{amsthm}
\usepackage{makecell}
\usepackage{multirow}
\usepackage[capitalize,noabbrev]{cleveref}

\theoremstyle{plain}
\newtheorem{theorem}{Theorem}[section]
\newtheorem{proposition}[theorem]{Proposition}
\newtheorem{lemma}[theorem]{Lemma}

\theoremstyle{definition}
\newtheorem{definition}[theorem]{Definition}
\newtheorem{assumption}[theorem]{Assumption}
\theoremstyle{remark}

\usepackage[disable,textsize=tiny]{todonotes}
\icmltitlerunning{Stochastic Minimum-Cost Reach-Avoid Reinforcement Learning}

\begin{document}

\twocolumn[
\icmltitle{Stochastic Minimum-Cost Reach-Avoid Reinforcement Learning}

% It is OKAY to include author information, even for blind
% submissions: the style file will automatically remove it for you
% unless you've provided the [accepted] option to the icml2026
% package.

% List of affiliations: The first argument should be a (short)
% identifier you will use later to specify author affiliations
% Academic affiliations should list Department, University, City, Region, Country
% Industry affiliations should list Company, City, Region, Country

% You can specify symbols, otherwise they are numbered in order.
% Ideally, you should not use this facility. Affiliations will be numbered
% in order of appearance and this is the preferred way.
\icmlsetsymbol{equal}{*}

\begin{icmlauthorlist}
\icmlauthor{Jingduo Pan}{cas,ucas}
\icmlauthor{Taoran Wu}{cas,ucas}
\icmlauthor{Yiling Xue}{cas,ucas,sais}
\icmlauthor{Bai Xue}{cas,ucas,sais}
\end{icmlauthorlist}

\icmlaffiliation{cas}{Key Laboratory of System Software (Chinese Academy of Sciences), Institute of Software, Chinese Academy of Sciences, Beijing, China}
\icmlaffiliation{ucas}{University of Chinese Academy of Sciences, Beijing, China}
\icmlaffiliation{sais}{School of Advanced Interdisciplinary Sciences, University of Chinese Academy of Sciences, Beijing, China}

\icmlcorrespondingauthor{Bai Xue}{xuebai@ios.ac.cn}

% You may provide any keywords that you
% find helpful for describing your paper; these are used to populate
% the "keywords" metadata in the PDF but will not be shown in the document
\icmlkeywords{Machine Learning, ICML}

\vskip 0.3in
]

% this must go after the closing bracket ] following \twocolumn[ ...

% This command actually creates the footnote in the first column
% listing the affiliations and the copyright notice.
% The command takes one argument, which is text to display at the start of the footnote.
% The \icmlEqualContribution command is standard text for equal contribution.
% Remove it (just {}) if you do not need this facility.

%\printAffiliationsAndNotice{}  % leave blank if no need to mention equal contribution
\printAffiliationsAndNotice{} % otherwise use the standard text.

\begin{abstract}
We study stochastic minimum-cost reach-avoid reinforcement learning, where an agent must satisfy a reach-avoid specification with probability at least $p$ while minimizing expected cumulative costs in stochastic environments. Existing safe and constrained reinforcement learning methods typically fail to jointly enforce probabilistic reach-avoid constraints and optimize cost in the learning setting in stochastic environments. To address this challenge, we introduce reach-avoid probability certificates (RAPCs), which identify states from which stochastic reach-avoid constraints are satisfiable. Building on RAPCs, we develop a contraction-based Bellman formulation that serves as a principled surrogate for integrating reach-avoid considerations into reinforcement learning, enabling cost optimization under probabilistic constraints. We establish almost sure convergence of the proposed algorithms to locally optimal policies with respect to the resulting objective. Experiments in the MuJoCo simulator demonstrate improved cost performance and consistently higher reach-avoid satisfaction rates.
\end{abstract}

\input{introduction}

\input{relatedwork}

\input{preliminary}

\input{Reach-Avoid}

\input{chapter5}

\input{experiments}

\input{conclusions}

\section*{Acknowledgements}

The authors would like to thank anonymous reviewers and meta-reviewers for their insightful comments. This work was partially supported by the CAS Pioneer Hundred Talents Program and the Basic Research Program of the Institute of Software, Chinese Academy of Sciences (Grant No. ISCAS-JCMS-202302).

\section*{Impact Statement}
This paper presents work whose goal is to advance reinforcement learning under probabilistic reach-avoid constraints. While the proposed methods may have potential applications in decision-making under uncertainty, this work focuses on algorithmic development and simulation-based evaluation and does not raise ethical concerns beyond those commonly associated with reinforcement learning research.

\bibliography{ref}
\bibliographystyle{icml2026}

\newpage
\appendix
\onecolumn
\input{appendix/Appendix_A}
\input{appendix/Appendix_B}
\input{appendix/Appendix_C}
\input{appendix/Appendix_D}

\end{document}

%% file: introduction.tex
\section{Introduction}
\label{sec:intro}
Autonomous decision-making systems in safety-critical domains, such as robotic navigation and autonomous driving \cite{andrychowicz2020learning,vagale2021path}, must satisfy safety constraints while efficiently accomplishing tasks. Many such problems naturally take the form of reach-avoid specifications, where an agent must reach a goal set while avoiding unsafe regions. Practical considerations such as energy consumption and control effort further motivate minimizing cumulative trajectory cost \cite{vagale2021path, zhang2021safe, de2024alternating}, leading to the minimum-cost reach-avoid problem \cite{so2024solving}.

In realistic environments, dynamics and observations are stochastic, making reach-avoid specifications inherently probabilistic. The stochastic minimum-cost reach-avoid problem therefore requires minimizing cumulative cost while ensuring the probability of reaching the goal without entering unsafe states exceeds a prescribed threshold, which remains challenging in reinforcement learning \cite{brunke2022safe}. To motivate this, consider an Automated Guided Vehicle (AGV) navigating a warehouse \cite{fragapane2021planning}. The AGV must reach a target loading dock while avoiding unpredictable collisions with human workers in a stochastic environment \cite{chen2019crowd}. The objective is to ensure that the probability of satisfying the reach-avoid constraint meets a safety threshold, while simultaneously minimizing cumulative battery consumption \cite{de2020automated}. This example captures the essence of our problem, where safety is specified as a probabilistic reach-avoid constraint and performance is measured by cumulative cost.

A large body of work in safe reinforcement learning studies the trade-off between task performance and safety constraints \cite{brunke2022safe}, including trust-region, primal-dual, barrier-function, and reachability-inspired methods \cite{tessler2018reward,chow2018risk,cheng2019end,yang2020projection,zhang2020first,marvi2021safe,yu2022reachability,ganai2023iterative}. 
However, in most approaches the reach objective is not imposed as an probabilistic specification, but is instead encoded implicitly through reward design, e.g., via sparse terminal rewards or dense shaping \cite{andrychowicz2017hindsight,plappert2018multi,trott2019keeping,liu2022goal}. 
This scalarizes reach-avoid satisfaction into the return, blurring value semantics and making it difficult to jointly optimize cost efficiency and probabilistic reach-avoid satisfaction.
A common workaround is to adopt surrogate formulations that retain this scalar trade-off structure, such as reward-cost scalarization or CMDP-style cumulative-cost constraints \cite{altman2021constrained}. 
However, such surrogates generally fail to preserve the structure of the original problem and require careful tuning of weights or thresholds, which could lead to infeasibility.
More recently, RC-PPO \cite{so2024solving} leverages Hamilton-Jacobi reachability to address minimum-cost reach-avoid but is restricted to deterministic settings. 

For related work that explicitly enforces probabilistic constraint satisfaction, chance-constrained and risk-sensitive MDP formulations ensure safety by constraining the probability or tail risk of undesirable outcomes \cite{bauerle2024markov}. 
This line of work includes formulations that maximize target-reaching probability \cite{lin2003optimal} and reinforcement learning methods based on CVaR \cite{chow2015risk} or quantile criteria \cite{li2022quantile,hau2024q}. 
However, these approaches typically characterize risk over accumulated returns rather than the probability of satisfying a reach-avoid specification over time, and thus lack a principled mechanism for jointly enforcing reach-avoid constraints while minimizing cumulative cost.

To address these limitations, we introduce the concept of the Reach-Avoid Probability Certificate (RAPC). An RAPC certifies that a given policy satisfies a reach-avoid specification with probability (i.e., reach-avoid probability) no less than a prescribed threshold. We develop a contraction-based Bellman formulation whose value function admits a clear probabilistic interpretation as a lower bound on the reach-avoid probability. 
When an RAPC exists, it provides a certificate of probabilistic task satisfaction. 
Building on this formulation, we propose Reach-Avoid Probability-Constrained Policy Optimization (RAPCPO). The method is motivated by this theoretically grounded value construction and incorporates a practical surrogate objective that empirically improves reach-avoid performance. We further show that RAPCPO converges almost surely to locally optimal policies of the surrogate objective under the proposed constrained optimization framework. We evaluate RAPCPO on a suite of stochastic minimum-cost reach-avoid benchmark tasks in the MuJoCo simulator. The experimental results demonstrate that, compared to state-of-the-art baselines, our approach achieves substantially lower cumulative cost while consistently satisfying the desired probabilistic reach-avoid specifications.

The main contributions of this paper are summarized below:
\begin{itemize}
    \item We present a contraction-based Bellman formulation for stochastic reach-avoid problems that encodes both safety and liveness requirements, and introduce Reach-Avoid Probability Certificates as a principled mechanism for enforcing probabilistic reach-avoid constraints.
    \item We propose RAPCPO, a reinforcement learning algorithm for stochastic minimum-cost reach-avoid problems, and establish almost-sure convergence to locally optimal policies under the proposed probabilistic constraint framework.
     
    \item We demonstrate the effectiveness of the proposed approach on challenging high-dimensional control tasks, achieving substantial improvements in cost efficiency while consistently achieving high satisfaction rates for the desired probabilistic reach-avoid specifications.
\end{itemize}

%% file: relatedwork.tex
\section{Related Works}

\textbf{Safe and risk-sensitive reinforcement learning.}
Safe reinforcement learning methods commonly balance task performance and safety
using reward shaping, penalties, or constrained optimization
\cite{altman2021constrained,brunke2022safe}.
In many approaches, reach objectives are encoded implicitly through sparse
terminal rewards or dense shaping
\cite{andrychowicz2017hindsight,trott2019keeping}, which entangles goal
satisfaction with cost accumulation and obscures the semantics of the learned
value functions.
Constrained Markov decision processes (CMDPs) instead impose constraints on
cumulative costs, with deep RL solutions based on trust-region or primal-dual
methods \cite{achiam2017constrained,yang2020projection}.
Related risk-sensitive and chance-constrained formulations explicitly reason
about uncertainty by constraining probabilities or tail risks of undesirable
outcomes \cite{bauerle2024markov}, including maximizing target-reaching
probability \cite{lin2003optimal} and methods based on CVaR or quantile criteria
\cite{chow2015risk,li2022quantile}.
However, 
However, these methods typically characterize risk over accumulated returns or rely on surrogate constraints, which do not explicitly capture the joint structure of stochastic minimum-cost reach-avoid problems and do not directly encode probabilistic reach-avoid satisfaction.
In contrast, we develop a Bellman-based value formulation that admits a clear probabilistic interpretation as a lower bound on reach-avoid probability. This formulation underpins Reach-Avoid Probability Certificates (RAPCs) and guides the design of a surrogate objective for policy optimization.

\textbf{Formal Specifications and Reachability-Based Methods.}
Barrier- and Lyapunov-based methods provide formal safety and stability guarantees in control and safe RL \cite{khalil2002nonlinear, berkenkamp2017safe, chow2018lyapunov}. For reach-avoid tasks, representative approaches include CLBF-based methods \cite{ames2019control} and barrier-like methods \cite{xue2024sufficient}. Similarly, HJ reachability offers principled, formally correct reach-avoid solutions \cite{tomlin2000game}, and has been connected to RL \cite{fisac2019bridging, yu2022reachability, ganai2023iterative}. More broadly, reach-avoid requirements can be viewed as a special case of Linear Temporal Logic (LTL) specifications, motivating a rich line of RL literature on learning under complex formal specifications \cite{le2024reinforcement, svoboda2024reinforcement}. Despite their theoretical rigor, however, these families of methods share a fundamental limitation: they focus primarily on specification satisfaction or maximizing satisfaction probability, rather than optimizing performance objectives such as cumulative cost.
Recently, RC-PPO \cite{so2024solving} addresses minimum-cost reach-avoid using HJ reachability, but is limited to deterministic environments.
In contrast, we study stochastic environments and develop a Bellman-based surrogate framework for policy optimization under probabilistic reach-avoid objectives.

%% file: preliminary.tex
\section{Preliminary}
\label{sec:preliminaries}

We briefly introduce Markov Decision Processes (MDPs) and formulate the
stochastic minimum-cost reach-avoid problem considered in this paper.

\subsection{Markov Decision Processes}

We consider a discounted MDP $\mathcal{M} = (\mathcal{X}, \mathcal{A}, P, c, g, h, \gamma)$ with state space $\mathcal{X}\subset\mathbb{R}^n$ and action space $\mathcal{A}\subset\mathbb{R}^m$.
The transition kernel $P(\cdot \mid x,a)$ specifies the distribution of the next
state given $(x,a)$, $c:\mathcal{X}\times\mathcal{A}\to\mathbb{R}$ is a bounded
stage cost, and $\gamma\in(0,1)$ is the discount factor. We define $1_{B}(x) = 1_{x\in B} = 1$ if $x \in B$, and $0$ otherwise. 
In addition, we assume that $\mathcal{X}$ is forward invariant, that is, $P(\mathcal{X}\mid x,a)=1, \forall (x,a)\in\mathcal{X}\times\mathcal{A}$. This assumption is widely used in the reinforcement learning and stochastic control literature \cite{ganai2023iterative,vzikelic2023compositional}. It ensures that trajectories starting from $\mathcal{X}$ remain in $\mathcal{X}$ almost surely, so that the value-function-type expectations appearing in \eqref{eq:value_function_conditions} and \eqref{eq:enhanced_bellman} are well defined throughout the analysis. Without such an invariance property, the next state may leave $\mathcal{X}$ with nonzero probability, and additional conventions would be needed to define quantities outside the state space. We refer the reader to \cite{cao2025comparative} for more details.

We define a target set $\mathcal{T}\subset\mathcal{X}$ and an unsafe set
$\mathcal{F}\subset\mathcal{X}$, assumed to be disjoint.
To encode reach and safety, define bounded functions $g,h:\mathcal{X}\to\mathbb{R}$ with $M>0$:
Let $M>0$. Define the functions
\begin{equation}
g(x) :=
\begin{cases}
-M, & x \in \mathcal{T},\\
> 0, & x \notin \mathcal{T},
\end{cases}
h(x) :=
\begin{cases}
M, & x \in \mathcal{F},\\
-M, & x \notin \mathcal{F}.
\end{cases}
\end{equation}

A stochastic policy $\pi(\cdot | x)$ induces trajectories $(x_t, a_t)_{t \ge 0}$ with $a_t \sim \pi(\cdot | x_t)$ and $x_{t+1} \sim P(\cdot | x_t, a_t)$. 
The state-action value function $Q_c^\pi(x, a) = \allowbreak \mathbb{E}_{\pi, P} [ \sum_{t=0}^{\infty} \gamma^t c(x_t, a_t) \mid x_0 = x, \allowbreak a_0 = a ]$. The state-value function $V_c^\pi(x) = \mathbb{E}_{a \sim \pi} [Q_c^\pi(x, a)]$ satisfies the recursive Bellman relation: $V_c^\pi(x) = \allowbreak \mathbb{E}_{a \sim \pi} [ c(x, a) + \gamma \mathbb{E}_{x' \sim P} [V_c^\pi(x')] ]$. In addtion, given an initial state distribution $\rho$, $d_\rho^\pi$ denotes the steady-state distribution induced by policy $\pi$ \cite{bojun2020steady}.

\subsection{Problem Formulation}

In stochastic systems, we quantify reach-avoid task satisfaction using probabilities.

\begin{definition}[Reach-Avoid Event and Probability]
\label{def:ra_prob}
For a trajectory $\tau=(x_t,a_t)_{t\ge0}$ with $x_0=x$, define the hitting time $T(\tau)=\inf\{t\ge0:x_t\in\mathcal{T}\}$.
We define the reach-avoid event $\mathbf{RA}_x$ as the set of successful trajectories:
\begin{equation}
\label{eq:RA_event_set}
\mathbf{RA}_x = \{ \tau \mid T(\tau) < \infty, \; \{x_t\}_{t=0}^{T(\tau)} \cap \mathcal{F} = \emptyset \}.
\end{equation}
The reach-avoid probability is defined as $\mathbb{P}_\pi(\mathbf{RA}_x) \allowbreak  = \mathbb{E}_\pi [ 1_{\tau \in \mathbf{RA}_x} \mid x_0 = x ]$. For any random variable $Z$, when $\mathbb{P}_\pi(\mathbf{RA}_x) > 0$, its conditional expectation is defined as:
\begin{equation}
\label{eq:cond_exp_refined}
\mathbb{E}_\pi [ Z \mid \mathbf{RA}_x ] = \frac{\mathbb{E}_\pi [ Z \cdot 1_{\tau \in \mathbf{RA}_x} \mid x_0 = x ]}{\mathbb{P}_\pi(\mathbf{RA}_x)}.
\end{equation}
\end{definition}

Base on Definition \ref{def:ra_prob}, we define the p-Reach-Avoid set $\mathbf{RA}_{p}$ as follows.
\begin{definition}[$p$-Reach-Avoid set]
\label{def:ra_set}
\begin{align}
    \mathcal{X}_{p}^{\pi}\coloneqq\{x \in \mathcal{X} \mid \mathbb{P}_{\pi}(\mathbf{RA}_{x}) \geq p\},
\end{align}
where $p \in (0,1)$ is a user-specified probability threshold and $\pi$ denotes the policy under consideration.A policy is feasible if $\mathcal{X}_p^\pi\neq\emptyset$.
The largest  $p$-Reach-Avoid set $\mathcal{X}_{p}\subset \mathcal{X}$ is composed of states $x$ from which there exists
at least one policy that keeps the system satisfying $\mathbb{P}_{\pi}(\mathbf{RA}_{x}) \geq p$, i.e.,
\begin{equation}
\mathcal{X}_{p}\coloneqq\{x \in \mathcal{X} \mid \exists \pi \text{ s.t. } \mathbb{P}_{\pi}(\mathbf{RA}_{x}) \geq p\}.
\end{equation}
\end{definition}

Practical reach-avoid tasks require the reach-avoid probability to exceed a prescribed threshold $p$.
Accordingly, our objective is twofold: over the feasible set $\mathcal{X}_p$, the policy minimizes cost while satisfying the reach-avoid constraint, whereas outside $\mathcal{X}_p$ it prioritizes improving reach-avoid probability.
We formulate the resulting minimum-cost reach-avoid problem as follows:
\begin{equation}
\label{eq:minimum_costs_reach_avoid}
\begin{split}
\min_{\pi}~\mathbb{E}_{x,a \sim d_{\rho}^{\pi}}
&\left[
V_{c}^{\pi}(x)\cdot1_{\mathcal{X}_p}(x) 
- 
\mathbb{P}_{\pi}\!\left(\mathbf{RA}_{x}\right)\,\cdot 1_{\mathcal{X}\setminus \mathcal{X}_p}(x)
\right] \\
\text{s.t.}~
 &\forall x \in \mathcal{X}_{p},\ \mathbb{P}_{\pi}\!\left(\mathbf{RA}_{x}\right) \ge p.
\end{split}
\end{equation}

%% file: Reach-avoid.tex
\section{Reach-Avoid Probability Certificates}
\label{sec:rapc}
This section introduces the reach-avoid probability certificate (RAPC), defined as a function that lower-bounds the exact reach-avoid probability under a fixed policy. The definition of an RAPC is intentionally broad: it requires only that the certificate constitute a valid lower bound, while subsequent results provide constructive and verifiable sufficient conditions based on Bellman-type equations.

\paragraph{Reach-avoid probability certificates.}
Since computing the exact reach-avoid probability $\mathbb{P}_{\pi}(\mathbf{RA}_{x_0})$ is generally intractable, a common approach is to derive computable lower bounds by searching for a function that certifies feasibility \cite{xue2021reach,vzikelic2023compositional,xue2024sufficient}.

\begin{definition}[Reach-Avoid Probability Certificate (RAPC)]
\label{def:value_certificate}
Given a policy $\pi$ and a state $x_0\in \mathcal{X}$, a bounded function $v^{\pi}:\mathcal{X}\to\mathbb{R}$ is called a RAPC if it satisfies
\begin{equation}
    \mathbb{P}_{\pi}(\mathbf{RA}_{x_0}) \ge v^{\pi}(x_0).
\end{equation}
\end{definition}

RAPC provides a rigorous lower bound of $\mathbb{P}_{\pi}(\mathbf{RA}_{x_0})$.
In the sequel, we present two conditions that yield RAPCs: (i) an indicator-based formulation from \cite{xue2024sufficient}, and (ii) a new enhanced Bellman equation that is more amenable to learning in black-box environments.

\subsection{Indicator-Based Characterization}
\label{sec:rapc_xue}
In \cite{xue2024sufficient}, a necessary and sufficient condition was established to certify whether $\mathbb{P}_{\pi}(\mathbf{RA}_{x_0})$ exceeds a prescribed threshold $p$ under Assumption \ref{assum1}.

\begin{assumption}
\label{assum1}
For $x_0\in \mathcal{X}$ and $\pi$, the reach-avoid probability satisfies $\mathbb{P}_{\pi}(\mathbf{RA}_{x_0})>p$. %for some $p\in[0,1)$.
\end{assumption}

\begin{proposition}
\label{prop:xue}
If there exist a bounded function $v_{\gamma}^{\pi}:\mathcal{X}\to\mathbb{R}$ and $\gamma\in(0,1)$ satisfying
\begin{equation}
\label{eq:value_function_conditions}
\begin{cases}
v_{\gamma}^{\pi}(x_0)\geq p,\\
v_{\gamma}^{\pi}(x) = 1_{\mathcal{T}}(x)+\\
\quad \gamma 1_{\mathcal{X} \setminus (\mathcal{T} \cup \mathcal{F})}(x)\mathbb{E}_{a \sim \pi, x' \sim P(x'|x,a)}\big[v_{\gamma}^{\pi}(x')\big], \\
\end{cases}
\end{equation}
then $\mathbb{P}_{\pi}(\mathbf{RA}_{x_0}) \ge p$.
Moreover, $v_{\gamma}^{\pi}$ is a RAPC. On the other hand, under Assumption~\ref{assum1}, there exist a bounded function $v_{\gamma}^{\pi}:\mathcal{X}\to\mathbb{R}$ and $\gamma\in(0,1)$ satisfying \eqref{eq:value_function_conditions}.
\end{proposition}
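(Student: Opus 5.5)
Our plan is to identify the unique bounded solution of the Bellman-type equation in \eqref{eq:value_function_conditions} with a discounted reach-avoid functional, and then compare it with the undiscounted one. Define the operator
\[
(\mathcal{B}v)(x)=1_{\mathcal{T}}(x)+\gamma 1_{\mathcal{X}\setminus(\mathcal{T}\cup\mathcal{F})}(x)\,\mathbb{E}_{a\sim\pi,\,x'\sim P(\cdot\mid x,a)}[v(x')]
\]
on bounded functions on $\mathcal{X}$. Forward invariance of $\mathcal{X}$ guarantees that $\mathcal{B}v$ is well defined and bounded. Since $\gamma<1$ and the indicator prefactor is at most $1$, $\mathcal{B}$ is a $\gamma$-contraction in the sup norm. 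By Banach's fixed point theorem it has a unique bounded fixed point, so any bounded $v_\gamma^\pi$ satisfying the equality in \eqref{eq:value_function_conditions} must be this fixed point.

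\textbf{Step 1 (soundness).} We identify the fixed point explicitly as
\[
w(x)=\mathbb{E}_\pi\!\left[\gamma^{T(\tau)}1_{\tau\in\mathbf{RA}_x}\mid x_0=x\right].
\]
To check that $w=\mathcal{B}w$, condition on the first step via the Markov property:
\begin{itemize}
\item if $x\in\mathcal{T}$, then $T=0$ and the trajectory lies in $\mathbf{RA}_x$, because $\mathcal{T}\cap\mathcal{F}=\emptyset$; hence $w(x)=1$;
\item if $x\in\mathcal{F}$, the trajectory fails, so $w(x)=0$;
\item otherwise, $T=1+T\circ\theta$ (with $\theta$ the shift) and success is inherited from the shifted path, giving $w(x)=\gamma\,\mathbb{E}[w(x_1)]$.
\end{itemize}
Uniqueness then gives $v_\gamma^\pi=w$. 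Because $\gamma^{T}\le 1$, we obtain $v_\gamma^\pi(x_0)\le\mathbb{P}_\pi(\mathbf{RA}_{x_0})$. This shows $v_\gamma^\pi$ is an RAPC in the sense of Definition~\ref{def:value_certificate}, and combined with $v_\gamma^\pi(x_0)\ge p$ it yields $\mathbb{P}_\pi(\mathbf{RA}_{x_0})\ge p$. An alternative route avoids the explicit identification: iterate the equation $n$ times to get $v=\mathbb{E}[\sum_{t}\gamma^t 1_{\mathcal{T}}(x_t)1\{x_s\notin\mathcal{T}\cup\mathcal{F},\,s<t\}]+\gamma^n(\cdot)$ and let $n\to\infty$, using boundedness. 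Either route is routine.

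\textbf{Step 2 (converse).} Under Assumption~\ref{assum1}, the same $w$ (now written $w_\gamma$) solves the equation for every $\gamma\in(0,1)$. It suffices to show $w_\gamma(x_0)\ge p$ for some $\gamma$. As $\gamma\uparrow1$, we have $\gamma^{T}1_{\mathbf{RA}}\uparrow 1_{\mathbf{RA}}$ pointwise, because $T<\infty$ on $\mathbf{RA}_{x_0}$. Monotone convergence then gives $w_\gamma(x_0)\to\mathbb{P}_\pi(\mathbf{RA}_{x_0})>p$. Hence there is some $\gamma<1$ with $w_\gamma(x_0)\ge p$, and $v_\gamma^\pi:=w_\gamma$ is bounded (it takes values in $[0,1]$).

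\textbf{Main obstacle.} The main technical point is the rigorous first-step decomposition in Step 1, specifically the measurability and strong-Markov bookkeeping for the hitting time with a general transition kernel. We handle this through the canonical path space and the shift operator. We also need the contraction argument to justify uniqueness, since the statement assumes an arbitrary bounded solution. The strict inequality in Assumption~\ref{assum1} is exactly what allows choosing a finite $\gamma<1$ in Step 2.
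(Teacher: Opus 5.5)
Your proof is correct, and it is self-contained where the paper is not. The paper gives no proof of this proposition: it attributes the result to \cite{xue2024sufficient}. Appendix A.2 only restates the cited result in its inequality (barrier-like) form, namely $v\le 1$ on the target, $v\le 0$ on the unsafe region, $v\le\gamma\,\mathbb{E}[v(x')]$ elsewhere, and $v(x_0)\ge p$, and it defers that proof as well.

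Your route exploits the equality directly. Contraction plus uniqueness pins the solution down as $v_\gamma^\pi(x)=\mathbb{E}_\pi[\gamma^{T}1_{\tau\in\mathbf{RA}_x}\mid x_0=x]$. Soundness then follows from $\gamma^T\le 1$. The converse follows from monotone convergence as $\gamma\uparrow 1$, and the strict inequality of Assumption~\ref{assum1} is used exactly where it is needed.

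This explicit identification gives something the paper only states heuristically. Whenever $\mathbb{P}_\pi(\mathbf{RA}_x)>0$, you have $\mathbb{E}_\pi[\gamma^T1_{\tau\in\mathbf{RA}_x}\mid x_0=x]=\phi_\gamma^\pi(x)\,\mathbb{P}_\pi(\mathbf{RA}_x)$. So for the fixed-$\gamma$ equation, $v_\gamma^\pi(x)=\phi_\gamma^\pi(x)\,\mathbb{P}_\pi(\mathbf{RA}_x)$ holds exactly, and normalizing by the compensation factor recovers the true reach-avoid probability.

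The inequality form buys flexibility instead. Any bounded function satisfying the inequalities certifies the bound without being the fixed point. That is what the paper needs in the proof of Theorem~\ref{thm:RAPC}, where $-V_{g,h}^\pi/M$ satisfies only inequalities. Your alternative $n$-step unrolling needs no uniqueness and goes through verbatim with $\le$ in place of $=$, so it is the version of your argument that yields this stronger statement.

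One small correction to your ``main obstacle'': only the ordinary Markov property at time $1$ is used, through the one-step shift identity for $T$ on $\{x_0\notin\mathcal{T}\}$. The strong Markov property is not needed.
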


Despite its theoretical appeal, \eqref{eq:value_function_conditions} poses  practical challenges as follows.
First, jointly learning $v_\gamma^\pi$ and $\gamma$ without explicitly enforcing $v_\gamma^\pi(x_0) \ge p$ admits degenerate solutions (e.g., $\gamma \to 0$ ) that satisfy the Bellman equation but drive $v_\gamma^\pi$ to zero for $x\in \mathcal{X}\setminus \mathcal{T}$, yielding an uninformative reach-avoid estimate.
Second, even with a fixed discount factor $\gamma$, approximating the value function $v_\gamma^\pi$ using TD-based algorithms suffers from inherently sparse learning signals: outside the target set $\mathcal{T}$, the immediate term vanishes, and informative feedback is obtained only upon reaching the target. Moreover, when \eqref{eq:value_function_conditions} provides insufficient learning signals, an agent optimizing the cost objective may converge to degenerate behaviors (e.g., remaining near the initial region) rather than actively exploring trajectories that satisfy the reach--avoid specification.

\subsection{An Enhanced Bellman Formulation}
\label{sec:4.2}
To overcome the sparse learning signal of \eqref{eq:value_function_conditions} while retaining a valid probabilistic certificate, we leverage the shaped functions $g(\cdot)$ and $h(\cdot)$ from Section~\ref{sec:preliminaries}. These functions define a max-min clamped Bellman operator that encodes reach-avoid constraints via boundary values, ensuring its unique fixed point admits a reach-avoid probability certificate. Choosing $g(x)$ to vary over $\mathcal{X}\setminus\mathcal{T}$ provides denser signals away from the target, improving sample efficiency in black-box environments.  

We introduce the following Bellman operator and prove the following lemma in Appendix \hyperref[Appendix A.1]{A.1}.
\begin{lemma}
\label{lem:enhanced_bellman_contraction}
Let $\gamma \in (0, 1)$ and let $V : \mathcal{X} \to \mathbb{R}$ be any bounded function.
Define the operator $B^\pi[\cdot]$ by
\begin{equation}
\label{eq:enhanced_bellman}
B^\pi[V](x) := \max\left\{ 
\begin{split}
  &h(x),\min\Big\{ g(x),\\
  &\gamma \mathbb{E}_{a\sim\pi,\,x'\sim P(\cdot\mid x,a)}
\big[ V(x') \big]
\Big\}  
\end{split}
\right\}.
\end{equation}
Then $V(x)=B^\pi[V](x)$ admits a unique bounded solution $V_{g,h}^\pi$, and $B^\pi$ is a $\gamma$-contraction under $\|\cdot\|_\infty$. Moreover, we have the state-action value $Q_{g,h}^\pi(s,a)= \max\Big\{ h(x),\allowbreak\, \min\Big\{ g(x),\,
\gamma \cdot \mathbb{E}_{\,x'\sim P(\cdot\mid x,a)}
\big[ V_{g,h}^{\pi}(x') \big]
\Big\}\Big\}.$
\end{lemma}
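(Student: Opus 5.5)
The argument is the standard Banach fixed-point route. Work on the space $\mathcal{B}(\mathcal{X})$ of bounded measurable functions on $\mathcal{X}$ with the sup-norm. This space is complete. The plan is to show that $B^\pi$ maps $\mathcal{B}(\mathcal{X})$ into itself and is a $\gamma$-contraction. Existence and uniqueness of a bounded fixed point $V_{g,h}^\pi$ then follow from Banach's theorem. The $Q$-function expression is a definitional consequence of unfolding the expectation over $a$.

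\textbf{Well-definedness.} For bounded $V$, forward invariance $P(\mathcal{X}\mid x,a)=1$ ensures that $\mathbb{E}_{a\sim\pi,x'\sim P}[V(x')]$ only evaluates $V$ on $\mathcal{X}$. Its absolute value is at most $\|V\|_\infty$. Since $g$ and $h$ are bounded, $B^\pi[V]$ satisfies
\[
|B^\pi[V](x)|\le \max\{\|h\|_\infty,\|g\|_\infty,\gamma\|V\|_\infty\}<\infty .
\]
So $B^\pi[V]\in\mathcal{B}(\mathcal{X})$. Measurability is preserved because $\max$, $\min$ and integration against a kernel all preserve it.

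\textbf{Contraction.} The key elementary fact is that the clamp maps $y\mapsto \min\{g,y\}$ and $y\mapsto\max\{h,y\}$ are $1$-Lipschitz in $y$ for fixed $g,h$:
\[
|\min\{g,y_1\}-\min\{g,y_2\}|\le|y_1-y_2|, \qquad |\max\{h,z_1\}-\max\{h,z_2\}|\le|z_1-z_2| .
\]
Both inequalities follow by a short case check on which argument is active. Composing them, for each $x$ and bounded $V,W$,
\[
|B^\pi[V](x)-B^\pi[W](x)|\le \gamma\,\big|\mathbb{E}_{a\sim\pi,x'\sim P}[V(x')-W(x')]\big|\le\gamma\|V-W\|_\infty .
\]
Taking the supremum over $x$ gives $\|B^\pi V-B^\pi W\|_\infty\le\gamma\|V-W\|_\infty$ with $\gamma<1$. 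Banach's fixed-point theorem then yields a unique bounded fixed point $V_{g,h}^\pi$. Moreover, iterates $(B^\pi)^k V_0$ converge to it from any bounded $V_0$.

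\textbf{$Q$-function.} Define $Q_{g,h}^\pi(x,a):=\max\{h(x),\min\{g(x),\gamma\mathbb{E}_{x'\sim P(\cdot\mid x,a)}[V_{g,h}^\pi(x')]\}\}$, which is the formula in the statement, with $s$ read as $x$. This is the action-conditioned version of $B^\pi$: it is the same operator with the outer expectation over $a\sim\pi$ removed. I would note that the identity to check is that the action-independent outer clamps are consistent with $V_{g,h}^\pi$. Here one should be careful. The clamps are nonlinear, so $\mathbb{E}_{a\sim\pi}Q_{g,h}^\pi(x,a)$ need not equal $V_{g,h}^\pi(x)$ in general. I therefore treat the $Q$ formula as the definition of the state-action value associated with the fixed point, and only record that $Q$ is well defined and bounded. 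This follows by the same bound as in the well-definedness step.

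The main obstacle is minor. It consists of the $1$-Lipschitz case analysis for the nested max/min, and of stating precisely in what sense the $Q$-formula "holds", namely as a definition rather than an averaging identity.
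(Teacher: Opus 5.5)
Your proposal is correct and follows the paper's proof essentially step for step. The shared steps are the self-map bound $\max\{\|h\|_\infty,\|g\|_\infty,\gamma\|V\|_\infty\}$, the $1$-Lipschitz property of the inner $\min$ and the outer $\max$ in their free argument, and Banach's fixed-point theorem on bounded functions with the sup-norm. Where you differ, you refine rather than change the route: you work on bounded measurable functions on $\mathcal{X}$ via forward invariance, instead of the paper's $\mathcal{B}(\mathbb{R}^n)$, and you read the $Q$-formula as a definition, which the paper also does in effect since it gives no argument for it (and indeed $\mathbb{E}_{a\sim\pi}Q_{g,h}^\pi(x,a)\neq V_{g,h}^\pi(x)$ in general because of the nonlinear clamp).
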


Then the following result holds and the proof is given
in Appendix~\hyperref[Appendix A.2]{A.2}.

\begin{theorem}
\label{thm:RAPC}
Let $V_{g,h}^\pi$ be the unique bounded fixed point of \eqref{eq:enhanced_bellman}.
Then for any initial state $x \in \mathcal{X}$, if $V_{g,h}^\pi(x)<0$, then
\begin{equation}
\mathbb{P}_{\pi}(\mathbf{RA}_{x})
\;\ge\;
-\frac{1}{M}V_{g,h}^\pi(x).
\end{equation}
Consequently, if $-\frac{1}{M}V_{g,h}^\pi(x_0) \ge p$, then
$\mathbb{P}_{\pi}(\mathbf{RA}_{x_0}) \ge p$.
\end{theorem}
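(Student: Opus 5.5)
We compare $V_{g,h}^\pi$ with the indicator-type value of Proposition~\ref{prop:xue}. Define the sub-stochastic ``reach-avoid'' functional $u(x) := -M\,\mathbb{E}_\pi\big[\gamma^{T(\tau)} 1_{\tau\in\mathbf{RA}_x}\big]$, where a trajectory is stopped at the first entrance into $\mathcal{T}\cup\mathcal{F}$. Since $\gamma^{T}\le 1$, we have $-\frac1M u(x)\le \mathbb{P}_\pi(\mathbf{RA}_x)$. It therefore suffices to show $V_{g,h}^\pi(x)\ge u(x)$ for all $x$, because then $V_{g,h}^\pi(x)<0$ gives $-\frac1M V_{g,h}^\pi(x)\le -\frac1M u(x)\le \mathbb{P}_\pi(\mathbf{RA}_x)$. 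The second claim of Theorem~\ref{thm:RAPC} is then immediate.

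First I would note that $u$ satisfies the fixed-point equation
\[
u(x) = -M 1_{\mathcal{T}}(x) + \gamma 1_{\mathcal{X}\setminus(\mathcal{T}\cup\mathcal{F})}(x)\,\mathbb{E}_{a\sim\pi,x'\sim P}[u(x')].
\]
This is $-M$ times the equation \eqref{eq:value_function_conditions}; its unique bounded solution is given by the first-step decomposition of the stopped trajectory. The corresponding operator $A^\pi$ is a monotone $\gamma$-contraction with $-M\le u\le 0$.

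Next I would show $B^\pi[W]\ge A^\pi[W]$ pointwise for every $W$ with $-M\le W\le 0$, splitting into three cases.
\begin{itemize}
\item[(i)] On $\mathcal{T}$: $g=-M$ and $h=-M$, so $B^\pi[W](x)=\max\{-M,\min\{-M,\cdot\}\}=-M=A^\pi[W](x)$.
\item[(ii)] On $\mathcal{F}$: $h=M$, so $B^\pi[W](x)=M\ge 0=A^\pi[W](x)$.
\item[(iii)] Off $\mathcal{T}\cup\mathcal{F}$: $g>0$ while $\gamma\mathbb{E}[W(x')]\le0$, so the min equals $\gamma\mathbb{E}[W(x')]$. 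Since $h=-M\le\gamma\mathbb{E}[W]$, we get $B^\pi[W](x)=\gamma\mathbb{E}[W(x')]=A^\pi[W](x)$.
\end{itemize}
$B^\pi$ is also monotone, since max, min and expectation all preserve order.

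The comparison then follows by iteration, and I expect the bounds on the iterates to be the main obstacle. Case (iii) relies on $W\le 0$, yet $V_{g,h}^\pi$ may be positive (for instance near $\mathcal{F}$). I would handle this by working only with iterates of the smaller operator. Start from $W_0=u$ and set $W_{k+1}=B^\pi[W_k]$. Then $W_1=B^\pi[u]\ge A^\pi[u]=u$, and monotonicity of $B^\pi$ gives $W_{k+1}\ge W_k\ge u$ for all $k$. The iterates may leave $[-M,0]$, but this is harmless because the inequality is only ever needed at $u$ itself.

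By Lemma~\ref{lem:enhanced_bellman_contraction}, $W_k\to V_{g,h}^\pi$ in $\|\cdot\|_\infty$, hence $V_{g,h}^\pi\ge u$. One point still needs checking: that $\gamma\mathbb{E}[u]\le g(x)$ off $\mathcal{T}$, which holds since $u\le0<g$. Given that, the whole argument uses only the sign structure of $g,h$ and forward invariance of $\mathcal{X}$, which guarantees the expectations are well defined.
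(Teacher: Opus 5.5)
Your proof is correct, and although your local case analysis matches the paper's, you close the argument by a different mechanism.

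\textbf{The paper's route.} It checks that $V_{g,h}^\pi$ satisfies barrier-type inequalities:
\begin{itemize}
\item $V_{g,h}^\pi\ge -M$ on $\mathcal T$;
\item $V_{g,h}^\pi\ge 0$ on $\mathcal F$;
\item $V_{g,h}^\pi(x)\ge\gamma\,\mathbb{E}[V_{g,h}^\pi(x')]$ off $\mathcal T\cup\mathcal F$, but only at states where $V_{g,h}^\pi(x)<0$.
\end{itemize}
It then invokes the external sufficient-and-necessary barrier condition of \cite{xue2024sufficient} for $-V_{g,h}^\pi/M$. The supersolution inequality can fail where $V_{g,h}^\pi>0$, so the paper restricts to $\mathcal X\setminus\{V_{g,h}^\pi>0\}$. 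This implicitly treats that region as unsafe. That step is legitimate, since enlarging $\mathcal F$ only shrinks $\mathbf{RA}_x$, but the paper leaves it unstated.

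\textbf{Your route.} Your cases (i)--(iii) are essentially the same local computation. You finish instead with a self-contained comparison principle. The exact discounted value $u$ is a fixed point of $A^\pi$, and $B^\pi\ge A^\pi$ on functions with values in $[-M,0]$. Monotone iteration of the contraction $B^\pi$ started at $u$ then gives $u\le V_{g,h}^\pi$. Applying the comparison only at $u$ is exactly what makes the sign of $V_{g,h}^\pi$ elsewhere irrelevant. You therefore need neither the external theorem nor any modification of $\mathcal F$.

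\textbf{What your route gives in addition.} It yields, for every $x$,
\[
-\tfrac{1}{M}V_{g,h}^\pi(x)\le\mathbb{E}_\pi\bigl[\gamma^{T}1_{\tau\in\mathbf{RA}_x}\bigr]=\phi_\gamma^\pi(x)\,\mathbb{P}_\pi(\mathbf{RA}_x).
\]
The hypothesis $V_{g,h}^\pi(x)<0$ only serves to make the statement nontrivial. This inequality upgrades the heuristic ``$\approx$'' in \eqref{eq:v_interpretation_refined} to a rigorous one-sided bound. It also shows that $\hat p_\pi(x)$, computed with the exact $\phi_\gamma^\pi$, is itself a lower bound on $\mathbb{P}_\pi(\mathbf{RA}_x)$ whenever that probability is positive. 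The paper only presents $\hat p_\pi$ as an uncertified surrogate; the learned $\phi_\gamma(\cdot;\xi)$ of course inherits no such guarantee.

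\textbf{What the paper's route gives.} It makes an explicit link to the barrier-certificate literature, exhibiting $-V_{g,h}^\pi/M$ as a barrier-like certificate in that framework.
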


\paragraph{Interpretation and Compensation Factor.}
We provide an intuitive interpretation of the fixed point $V_{g,h}^\pi$ by examining how different trajectory realizations contribute to the Bellman operator \eqref{eq:enhanced_bellman}. Recall that $\mathbf{RA}_x$ denotes the set of trajectories that reach the target set $\mathcal{T}$ before entering the unsafe set, starting from $x_0 = x$.

While the Bellman fixed point $V_{g,h}^\pi$ is determined by all trajectories induced by policy $\pi$, only trajectories in $\mathbf{RA}_x$ exhibit a simple discounted recursion that directly links the value magnitude to the target hitting time. We therefore focus on this subset to expose the effect of discounting.

For any realization $\tau \in \mathbf{RA}_x$ with hitting time $T$, the operator \eqref{eq:enhanced_bellman} reduces along the trajectory to the linear recursion
$V_{g,h}^\pi(x_t) = \gamma V_{g,h}^\pi(x_{t+1})$ for all $t < T$ because of the definitions of $g(\cdot)$ and $h(\cdot)$ as in
Section~\ref{sec:preliminaries}, with boundary condition $V_{g,h}^\pi(x_T) = -M$, yielding $V_{g,h}^\pi(x_0) = -\gamma^T M$.

Isolating the contribution of such trajectories in $\mathbf{RA}_x$ leads to the following explanatory decomposition:
\begin{equation}
\label{eq:v_interpretation_refined}
V_{g,h}^\pi(x)
\approx
\mathbb{E}_\pi \!\left[ -M \gamma^T \mid \mathbf{RA}_x,\, x_0 = x \right]
\mathbb{P}_\pi(\mathbf{RA}_x).
\end{equation}
This decomposition highlights a discount-induced attenuation effect: when the expected hitting time $T$ is large, the exponential decay of $\gamma^T$ suppresses the value magnitude even if the reach-avoid probability is high. This effect is inherent to discounted formulations of long-horizon reach-avoid problems and cannot be eliminated by tuning $\gamma$ alone.

Since this attenuation arises exclusively from successful reach-avoid executions, we introduce a compensation factor that conditions on the event $\mathbf{RA}_x$.

\begin{definition}[Compensation Factor]
\label{def:compensation factor}
For a policy $\pi$ and initial state $x_0 = x$, let $T$ denote the first hitting time of $\mathcal{T}$. The compensation factor is defined as
\begin{equation}
\label{eq:phi_def_relation}
\phi_\gamma^\pi(x)
\coloneqq
\mathbb{E}_\pi[\gamma^T \mid \mathbf{RA}_x,\, x_0 = x].
\end{equation}
\end{definition}

Based on \eqref{eq:v_interpretation_refined} and Theorem~\ref{thm:RAPC},
$-V_{g,h}^\pi(x)/M$ induces a certified lower bound on the
reach-avoid probability; however, in long-horizon
tasks this bound can be significantly attenuated by discounting, since
the Bellman recursion propagates value information backward with a
factor of $\gamma^T$ along trajectories $\tau \in \mathbf{RA}_x$ .
Consequently, in long-horizon tasks, the magnitude $|V_{g,h}^\pi(x)|$ can be small even when the true reach-avoid probability
is high, making the quantity $-V_{g,h}^\pi(x)/M$ overly conservative.
This motivates the normalized surrogate.
From the approximate decomposition in
\eqref{eq:v_interpretation_refined}, we have
$
-\frac{V_{g,h}^\pi(x)}{M}
\;\approx\;
\phi_\gamma^\pi(x)\, \mathbb{P}_\pi(\mathbf{RA}_x),
$
suggesting the normalized estimator
\begin{equation}
\hat{p}_\pi(x)
\;\triangleq\;
-\frac{V_{g,h}^\pi(x)}{M \, \phi_\gamma^\pi(x)}.
\end{equation}
While $\hat p_\pi(x)$ is not a certified probability bound, it mitigates
discount-induced attenuation and serves as an optimization surrogate.
Removing $\phi_\gamma^\pi(x)$ leads to an overly conservative estimate and degrades performance, preventing the learned policy from achieving low costs (see in Section~\ref{sec:exp}).

Therefore, given a probability threshold $p$ and a discount factor
$\gamma$, we consider a surrogate optimization objective based on
$-\frac{V_{g,h}^\pi(x)}{M\phi_\gamma^\pi(x)}$; since $M$ is a constant, it
is omitted in the objective without affecting the optimizer. This leads to the following surrogate optimization problem (RAPCRL):
\begin{equation}
\label{RAPCRL}
\begin{aligned}
\min_{\pi}\quad &
\mathbb{E}_{x, a \sim d_\rho^\pi}\!\left[
V_c^\pi(x) \cdot 1_{\mathcal{X}_p}
+ \frac{V_{g,h}^\pi(x)}{\phi_{\gamma}^\pi(x)} \cdot 1_{\mathcal{X} \setminus \mathcal{X}_p}
\right] \\
\text{s.t.}\quad &
\forall x \in \mathcal{X}_p,\quad
V_{g,h}^\pi(x) \le -M \cdot \phi_{\gamma}^\pi(x) \cdot p.
\end{aligned}
\end{equation}

%% file: chapter5.tex
\section{Stochastic Minimum-Cost Reach-Avoid Reinforcement Learning}
\label{sec:rapc_ppo}

This section proposes RAPCPO, an actor-critic framework for the surrogate stochastic minimum-cost reach-avoid problem~\eqref{RAPCRL}.
Unlike the certificate-based updates in Section~\ref{sec:rapc}, RAPCPO does not enforce reach-avoid feasibility at every iteration.
Instead, it exploits certificate structure to construct policy-dependent surrogate learning signals that enable efficient optimization in long-horizon black-box environments. We also provide a theoretical analysis of the algorithm’s convergence
behavior.

\begin{algorithm}[t]
\caption{\textsc{RAPCPO} Actor-Critic}
\label{alg:rapcpo_actor_critic}
\begin{algorithmic}[1]
\STATE \textbf{Input:} MDP $M$; critic step size $\zeta_1(l)$; actor step size $\zeta_2(l)$; compensation-factor step size $\zeta_3(l)$; horizon $H$ 
\STATE \textbf{Initialize:} policy $\theta$; RAPC critic $\eta$; cost critic $\kappa$; compensation factor $\xi$.
\FOR{$l \gets 0,1,2,\dots$}
    \STATE Initialize state $x_0 \sim \rho$
    \FOR{$t \gets 0,1,2,\dots,H-1$}
        \STATE Sample $a_t \sim \pi_{\theta}(\cdot\mid x_t)$; observe $x_{t+1}$, cost $c_t$, and signals $g(x_t), h(x_t)$; Store transition $(x_t,a_t,c_t,g(x_t),h(x_t),x_{t+1})$ in $\mathcal{D}$
        \STATE \textbf{RAPC critic update:} $\eta \leftarrow \eta - \zeta_1(l)\,\nabla_{\eta}\mathcal{J}_{Q_{g,h}}(\eta)$
        \STATE \textbf{Cost critic update:} $\kappa \leftarrow \kappa - \zeta_1(l)\,\nabla_{\kappa}\mathcal{J}_{Q_c}(\kappa)$ 
        % \textcolor{red}{What is $\mathcal{J}_{Q_c}(\kappa)$? }
        \STATE Clamp $\phi_\gamma(x_t;\xi)\leftarrow\max\{\phi_\gamma(x_t;\xi),10^{-6}\}$
        \STATE Compute $g_\theta$ in \eqref{eq:mixed gradient} using $1_{\mathcal{X}_p^{\pi_{\theta_l}}}(x_t)$ and Eqs.~(15)-(19) (stop-grad through $1_{\mathcal{X}_p^{\pi_{\theta_l}}}(x_t),\phi_\gamma$)
        % \hfill (stop-grad through $1_{\mathcal{X}_p^{\pi_{\theta_l}}}(x_t),\phi_\gamma$)
        \STATE \textbf{Policy update:} $\theta \leftarrow \Gamma_{\Theta}\!\big(\theta - \zeta_2(l)\, g_\theta\big)$
    \ENDFOR
    \STATE \textbf{Compensation-factor update:}
    \IF{trajectory reaches $\mathcal{T}$ before entering $\mathcal{F}$ with first hitting time $T$}
        \STATE Set $y_t=\gamma^{T-t}$ for all visited $x_t$ with $t<T$
        \STATE $\xi \leftarrow \xi - \zeta_3(l)\,\nabla_{\xi}\mathcal{J}_{\phi}(\xi)$
    \ELSE
        \STATE \textbf{skip} update of $\xi$
    \ENDIF
\ENDFOR
\end{algorithmic}
\end{algorithm}
\subsection{Surrogate Optimization via Certificate-Guided Partitioning}
\label{sec:algorithm}

We consider the surrogate optimization problem~\eqref{RAPCRL}, which depends on the unknown $p$-reach-avoid set $\mathcal X_p$.
In black-box environments, $\mathcal X_p$ is not explicitly available.
Moreover, the compensation factor $\phi_\gamma^\pi(x)$ does not satisfy a Bellman equation, preventing direct application of standard policy gradient methods.

Rather than solving \(\eqref{RAPCRL}\) directly, \textsc{RAPCPO} tackles a policy-dependent surrogate problem that heuristically approximates \(\eqref{RAPCRL}\). This surrogate formulation is induced by reach-avoid critics, and the algorithm alternates between critic learning, surrogate-feasible set construction, and policy optimization with partitioned and normalized surrogate objectives. Let $\pi(x;\theta)$ denote the policy parameterized by $\theta$.
At iteration $l$, given the current policy $\pi_{\theta_l}$ and critic estimates, we define the critic-induced surrogate-feasible set
\begin{equation}
\label{eq:Xp_pi_l}
\mathcal{X}_p^{\pi_{\theta_l}}
=
\left\{
x \in \mathcal{X}\ \middle|\ 
\begin{split}
&V_{g,h}^{\pi_{\theta_l}}(x)
\le
-\,pM \phi_\gamma^{\pi_{\theta_l}}(x),\\
&\phi_\gamma^{\pi_{\theta_l}}(x) \ge 0
\end{split}
\right\}.
\end{equation}
The set $\mathcal{X}_p^{\pi_{\theta_l}}$ induces a policy-dependent partition of the state space, prioritizing cost optimization inside the set and reach-avoid improvement outside.
Using $\mathcal{X}_p^{\pi_{\theta_l}}$ and the current compensation-factor estimate $\phi_\gamma^{\pi_{\theta_l}}$, we guide the policy update at iteration $l$ by the surrogate problem
\begin{equation}
\begin{aligned} 
\label{eq:full_optimization_revised}
\min_{\pi} \quad &
\mathbb{E}_{x, a \sim d_\rho^{\pi}}\left[
V_c^{\pi}(x)\cdot 1_{\mathcal{X}_p^{\pi_{\theta_{l}}}}(x)
+ 
\frac{V_{g,h}^\pi(x)}{\phi_{\gamma}^{\pi_{\theta_{l}}}(x)}\cdot
1_{\mathcal{X} \setminus \mathcal{X}_p^{\pi_{\theta_{l}}}}(x)
\right] \\
\text{s.t.}\quad &
\forall x \in \mathcal{X}_p^{\pi_{\theta_{l}}},\quad
V_{g,h}^\pi(x) \le -M \cdot \phi_{\gamma}^{\pi_{\theta_{l}}}(x)\cdot p .
\end{aligned}
\end{equation}
During iteration $l$, $\phi_\gamma^{\pi_{\theta_l}}$ is treated as a fixed, non-differentiable normalization factor.

Consider the classical actor-critic framework with state-
action value functions.
We learn action-value critics $Q_{g,h}^\pi(x,a)$, $Q_c^\pi(x,a)$ and $\phi_\gamma^\pi(x)$ with function approximators $Q_{g,h}(x,a;\eta)$, $Q_c(x,a;\kappa)$ and $\phi_\gamma(x;\xi)$. 
The cost critic is updated via standard temporal-difference learning~\cite{sutton1998reinforcement} by minimizing the objective $\mathcal{J}_{Q_c}(\kappa)$; see Appendix~\hyperref[Appendix B.1]{B.1} for details.
The reach-avoid critic is trained using the self-consistency condition in~\eqref{eq:enhanced_bellman} by minimizing
\begin{equation}
\begin{aligned}
\mathcal{J}_{Q_{g,h}}(\eta)
=
\mathbb{E}_{(x,a)\sim \mathcal{D}}
&\Big[
\tfrac{1}{2}
\big(
e_{g,h}(x,a;\eta)
\big)^2
\Big],
\end{aligned}
\end{equation}
where $e_{g,h}(x,a;\eta)
:= Q_{g,h}(x,a;\eta)-\hat{Q}_{g,h}(x,a)$ and
\begin{equation}
\label{eq:qhat_revised}
\begin{aligned}
\hat{Q}_{g,h}&(x, a)
=
\max\Big\{ h(x),\, \min\big\{ g(x),\, \\
&\gamma\,\mathbb{E}_{x'\sim P(\cdot\mid x,a),\,a'\sim \pi(\cdot\mid x')}
\big[ Q_{g,h}(x',a';\eta) \big]
\big\}\Big\},
\end{aligned}
\end{equation}
$\mathcal{D}$ is the distribution of previously sampled states and actions (i.e., $d_\rho^\pi$), or a replay buffer, and $a$ is the action taken at $s$.
The compensation factor $\phi_\gamma(x;\xi)$ is estimated from successful reach-avoid rollouts of the current policy. For trajectories that reach the target set $\mathcal T$ before the failure set $\mathcal F$, each visited state $x_t$ with hitting time $T$ is assigned the target value $y_t=\gamma^{T-t}$, and $\xi$ is updated by minimizing
\begin{equation}
\label{eq:Jphi_revised}
\mathcal{J}_{\phi}(\xi)
=
\mathbb{E}_{x_t\sim\mathcal{D}}
\Big[\big(\phi_\gamma(x_t;\xi)-y_t\big)^2\Big].
\end{equation}
During policy optimization, $\phi_\gamma(x;\xi)$ is treated as a fixed, non-differentiable normalization factor.

Within $\mathcal{X}_p^{\pi_{\theta_l}}$, cost reduction and reach-avoid improvement may induce conflicting gradients. To mitigate such conflicts, we apply symmetric projection-based gradient rectification~\cite{xu2021crpo,gu2024balance}, which typically results in reach-avoid probabilities exceeding the prescribed threshold $p$ in RAPCPO, as each update jointly reduces the cost and increases the lower bound of the reach-avoid probability. Specifically, we compute three policy-gradient components:
\begin{equation}
\label{eq:grad_components}
\begin{aligned}
g_r^{in}  &= \frac{\nabla_\theta \mathbb{E}\!\left[Q_{g,h}^{\pi_\theta}(x,a; \eta)\,1_{\mathcal{X}_p^{\pi_{\theta_l}}}(x)\right]}{\max\{\phi_\gamma(x; \xi), 10^{-6}\}}, \\
g_r^{out} &= \frac{\nabla_\theta \mathbb{E}\!\left[Q_{g,h}^{\pi_\theta}(x,a;\eta)\,1_{\mathcal{X}\setminus\mathcal{X}_p^{\pi_{\theta_l}}}(x)\right]}{\max\{\phi_\gamma(x; \xi), 10^{-6}\}}, \\
g_c^{in}  &= \nabla_\theta \mathbb{E}\!\left[Q_c^{\pi_\theta}(x,a;\kappa)\,1_{\mathcal{X}_p^{\pi_{\theta_l}}}(x)\right].
\end{aligned}
\end{equation}
Here the partition mask $1_{\mathcal{X}_p^{\pi_{\theta_l}}}(x)$ is treated as fixed within iteration $l$ (i.e., no gradients are propagated through the critic-induced set). When the in-set gradients are negatively aligned, i.e., $\langle g_r^{\mathrm{in}}, g_c^{\mathrm{in}} \rangle < 0$, we apply symmetric projection:
\begin{equation}
\begin{aligned}
\tilde g_r^{in}
=
g_r^{in} - \frac{\langle g_r^{in}, g_c^{in} \rangle}
{\langle g_c^{in}, g_c^{in} \rangle} g_c^{in},\\
\tilde g_c^{in}
=
g_c^{in} - \frac{\langle g_c^{in}, g_r^{in} \rangle}
{\langle g_r^{in}, g_r^{in} \rangle} g_r^{in}.
\end{aligned}
\end{equation}
The mixed in-set update direction is then
\begin{equation}
g_{\mathrm{mix}} =
\begin{cases}
\tilde g_r^{in} + \tilde g_c^{in}, & \langle g_r^{in}, g_c^{in} \rangle < 0,\\
g_r^{in} + g_c^{in}, & \text{otherwise}.
\end{cases}
\end{equation}
The final policy update direction is given by
\begin{equation}
\label{eq:mixed gradient}
g_\theta = g_r^{out} + g_{\mathrm{mix}}.
\end{equation}
Algorithm~\ref{alg:rapcpo_actor_critic} provides the pseudo-code of an actor-critic version of RAPCPO.
The algorithm alternates between environment interaction and stochastic gradient updates of
${\nabla}_{\eta} \mathcal{J}_{Q_{g,h}}(\eta)$,
${\nabla}_{\kappa} \mathcal{J}_{Q_{c}}(\kappa)$,
${\nabla}_{\xi} \mathcal{J}_{\phi}(\xi)$,
and the policy gradient $g$, with detailed derivations provided in Appendix~\hyperref[Appendix B.1]{B.1}. 
In the algorithm, the $\Gamma_{\Theta}(\theta)$ operator projects a vector $\theta \in \mathbb{R}^k$ to the closest point in a compact and convex set $\Theta \subseteq \mathbb{R}^k$, 
i.e., $\Gamma_{\Theta}(\theta) = \arg\min_{\hat{\theta} \in \Theta} \|\hat{\theta} - \theta\|^2$.

\subsection{Convergence Analysis}
\label{subssec:ca}
We analyze the convergence of RAPCPO in an actor–critic setting. 
Under standard conditions on step sizes, exploration, and bounded policy parameters, the policy iterates converge almost surely to the set of generalized stationary points of the surrogate objective, in the sense of differential inclusions. 
The formal result and proof are provided in Appendix~\hyperref[Appendix B.2]{B.2}.

%% file: experiments.tex
\section{Experiments}
\label{sec:exp}
Details of our on-policy RAPCPO implementation based on proximal policy optimization (PPO) \cite{schulman2017proximal} are given in Appendix~\hyperref[Appendix C]{C}, 
and we set $p=0.5$ in RAPCPO.
We evaluate \textsc{RAPCPO} to answer the following questions:
\begin{itemize}
\setlength{\itemsep}{0pt}
\setlength{\parsep}{0pt}
\setlength{\parskip}{0pt}
\setlength{\partopsep}{0pt}
\setlength{\topsep}{0pt}
\item Can RAPCPO achieve lower cost while maintaining a high reach-avoid probability in both deterministic and stochastic tasks?
\item How critical is the compensation factor $\phi_\gamma^{\pi}$ to the performance of RAPCPO, as demonstrated through ablation studies?
\item How does the parameter $p$ influence the trade-off between cost and reach-avoid probability?
\end{itemize}

\paragraph{Benchmarks and Baselines.}
Benchmark and implementation details are provided in Appendix~\hyperref[Appendix D]{D}.
We evaluate two environment classes: (i) deterministic minimum-cost reach-avoid tasks (PointGoal, FixedWing, Pendulum \cite{ray2019benchmarking, so2023solvingstabilizeavoidoptimalcontrol}); (ii) stochastic tasks (Safety Hopper, Safety HalfCheetah \cite{todorov2012mujoco}) with $10\%$ Gaussian action noise. Across environments, the green region denotes the target set, while the red region indicates unsafe states.

We compare RAPCPO with representative baselines from three categories.
CMDP-based methods with a predefined cost budget include RESPO~\cite{ganai2023iterative}, Saut\'e~\cite{sootla2022saute}, and CPPO~\cite{ying2022towards} (which constrains the CVaR of cumulative costs).
Weighted-sum baselines optimize a scalarized reward; we include PPO$_\beta$ with a fixed Lagrange multiplier $\beta$.
Reach-avoid-specific baselines include RC-PPO~\cite{so2024solving}, designed for deterministic minimum-cost reach-avoid tasks.\textcolor{orange}{}

\paragraph{CMDP surrogate for evaluation.}
Because the minimum-cost reach-avoid objective cannot be posed as a CMDP, we reformulate into the following surrogate CMDP:
\begin{equation}
\label{eq:threshold}
\begin{split}
    \max_{\pi} &\quad \mathbb{E}_{\tau \sim \pi} \sum_{t} \left[\gamma^t r(x_t, u_t)\right] \\
    \text{s.t.} &
    \begin{cases}
        \mathbb{E}_{\tau \sim \pi} \sum_{t} \left[\gamma^t \mathbf{1}_{x_t \in \mathcal{F}}\, \times C_{\text{fail}}\right] \le 0 ,\\
        \mathbb{E}_{\tau \sim \pi} \sum_{t} \left[\gamma^t c(x_t, u_t)\right] \le \mathcal{X}_{\text{threshold}}.
    \end{cases}
\end{split}
\end{equation}
where the reward $r$ incentivies goal-reaching, $C_{\mathrm{fail}}$ is a term balancing two constraint terms, and we set $\mathcal{X}_{\text{threshold}}$ to the average minimum cumulative cost achieved by RAPCPO over multiple runs, yielding comparable cost tolerance across methods.
For PPO$_\beta$, we use the modified reward
$r(x_t) - \beta\big(\mathbf{1}_{x_t \in \mathcal{F}}\, C_{\text{fail}} + c(x_t, u_t)\big)$.
All baseline methods incorporate distance-based potential reward shaping for learning.
As shown in~\cite{so2024solving}, CMDP surrogates and weighted-sum objectives are structurally misaligned with the minimum-cost reach-avoid objective, leading to an inherent performance gap. RAPCPO avoids this mismatch by optimizing the reach-avoid objective directly, and our experiments validate the resulting improvement over strong baselines.

\subsection{Deterministic Minimum-Cost Reach-Avoid}

Figure~\ref{fig:deterministic_benchmarks} illustrates the deterministic benchmark environments.
Across PointGoal and FixedWing, RAPCPO consistently achieves the lowest cumulative cost while maintaining higher or comparable reach rates relative to $PPO_{\beta}$, Saut\'e, CPPO, and RESPO (Figure~\ref{fig:deterministic_data}).
 Saut\'e and CPPO impose extremely stringent constraint handling when solving the optimization problem. Such treatment is effective for avoiding unsafe regions, where the constraint is only triggered by a small subset of states. However, when the cost is incurred at every state, Saute and CPPO with CVaR constraints become overly conservative, leading to severely degraded performance and even extremely low reach–avoid task success rates. In contrast, expectation-based constraints yield substantially better empirical results in this setting.
\begin{figure}[htbp]
  \centering
  \begin{subfigure}[t]{0.49\columnwidth}
    \centering
    \includegraphics[width=\linewidth]{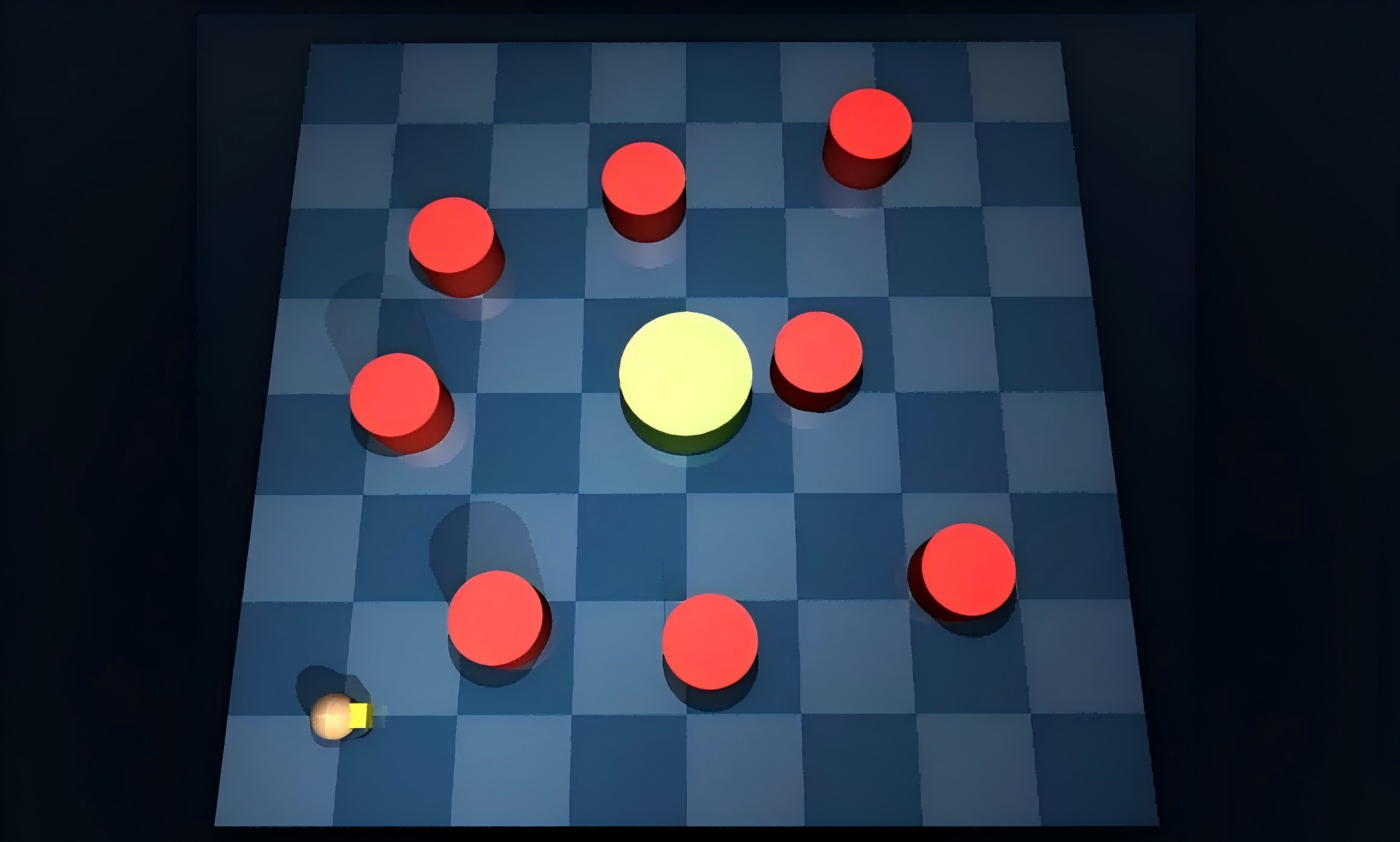}
    \caption{PointGoal}
    \label{fig:bench_pointgoal}
  \end{subfigure}\hfill
  \begin{subfigure}[t]{0.49\columnwidth}
    \centering
    \includegraphics[width=\linewidth]{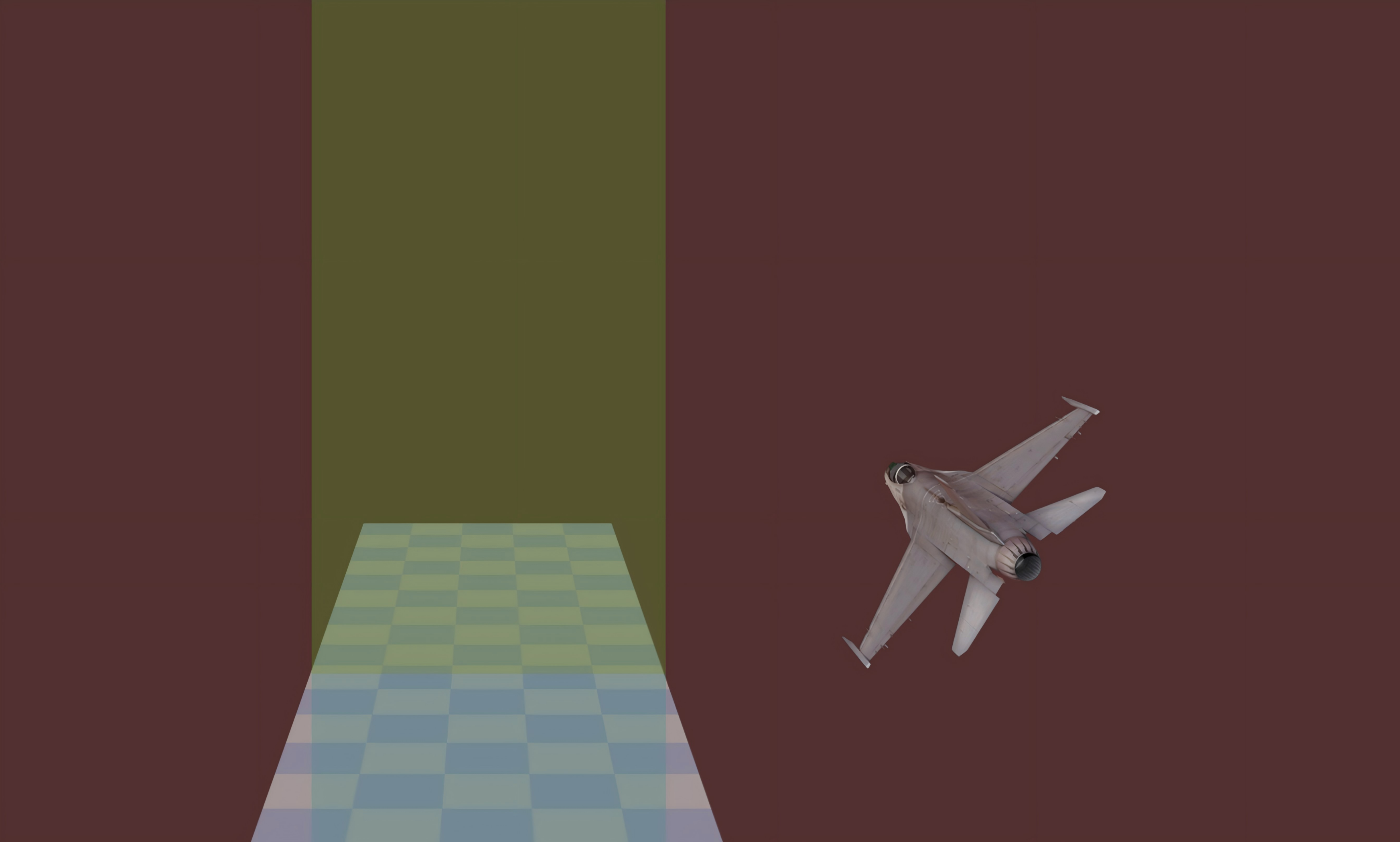}
    \caption{FixedWing}
    \label{fig:bench_fixedwing}
  \end{subfigure}
  \caption{Deterministic minimum-cost reach-avoid benchmark environments.}
  \label{fig:deterministic_benchmarks}
\end{figure}

Under the same number of environment-interaction iterations, RC-PPO often yields inferior performance, as summarized in Table~\ref{tab:RC-RAPC}.
To ensure a fair comparison, we additionally report RC-PPO results trained with twice the number of iterations in Figure~\ref{fig:deterministic_data}.

\begin{figure}[htbp]
  \centering
  \begin{subfigure}[t]{0.49\columnwidth}
    \centering
    \includegraphics[width=\linewidth]{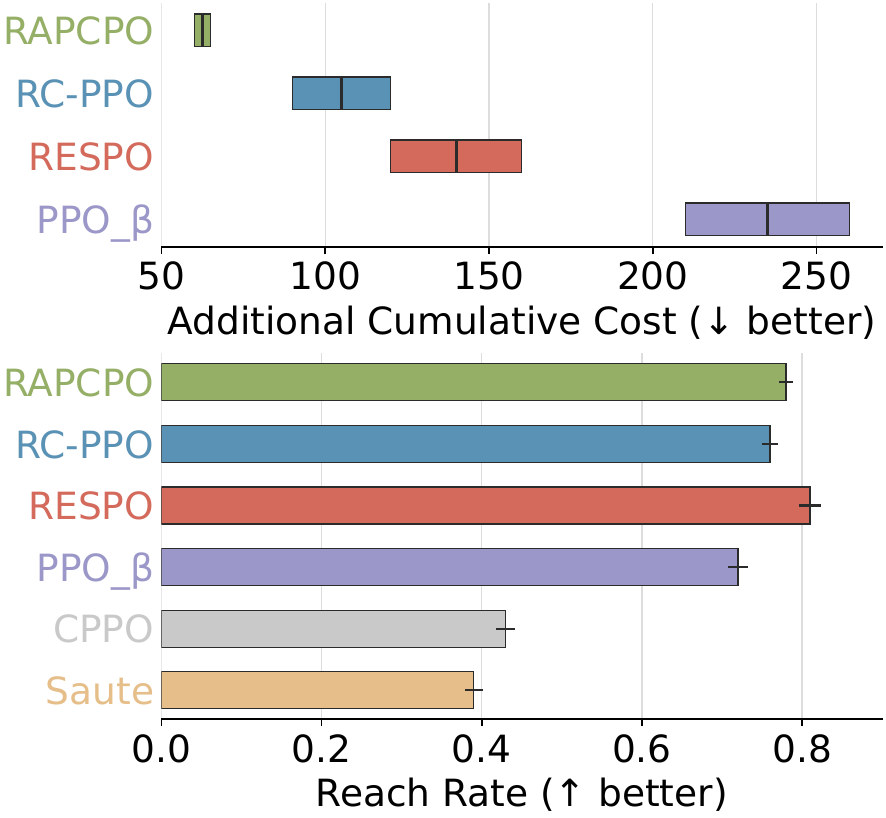}
    \caption{PointGoal}
    \label{fig:det_data_pointgoal}
  \end{subfigure}\hfill
  \begin{subfigure}[t]{0.49\columnwidth}
    \centering
    \includegraphics[width=\linewidth]{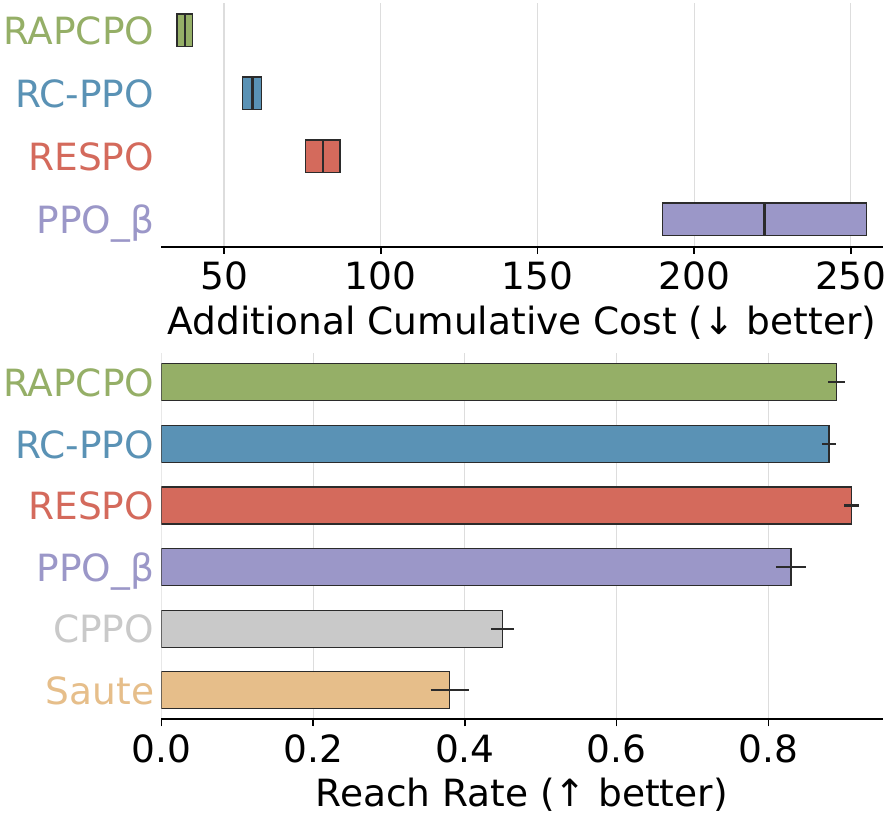}
    \caption{FixedWing}
    \label{fig:det_data_fixedwing}
  \end{subfigure}
  \caption{Cumulative cost and reach rate on deterministic benchmarks.
  RAPCPO achieves lower cost with competitive or higher reach rates compared to baselines.}
  \label{fig:deterministic_data}
\end{figure}

\begin{table}[htbp]
\centering
\caption{Reach rate comparison between RC-PPO and RAPCPO under the same iteration budget.}
\label{tab:RC-RAPC}
\resizebox{\columnwidth}{!}{
\begin{tabular}{lcc}
\toprule
Method & PointGoal & FixedWing \\
\midrule
RC-PPO & 62.29\% & 73.98\% \\
RAPCPO (ours) & \textbf{78.49\%} & \textbf{88.67\%} \\
\bottomrule
\end{tabular}
}
\end{table}

On Pendulum, RC-PPO and RAPCPO exhibit similar energy-pumping behaviors and achieve comparable reach rates and accumulated cost.
We therefore focus on comparing RAPCPO against RESPO to highlight the impact of explicitly optimizing minimum cost.
As shown in Figure~\ref{fig:pendulum_data}, RAPCPO achieves lower cumulative cost by exploiting longer, cost-efficient trajectories.

\begin{figure}[htbp]
  \centering
  \begin{subfigure}[t]{0.49\linewidth}
    \centering
    \includegraphics[width=\linewidth]{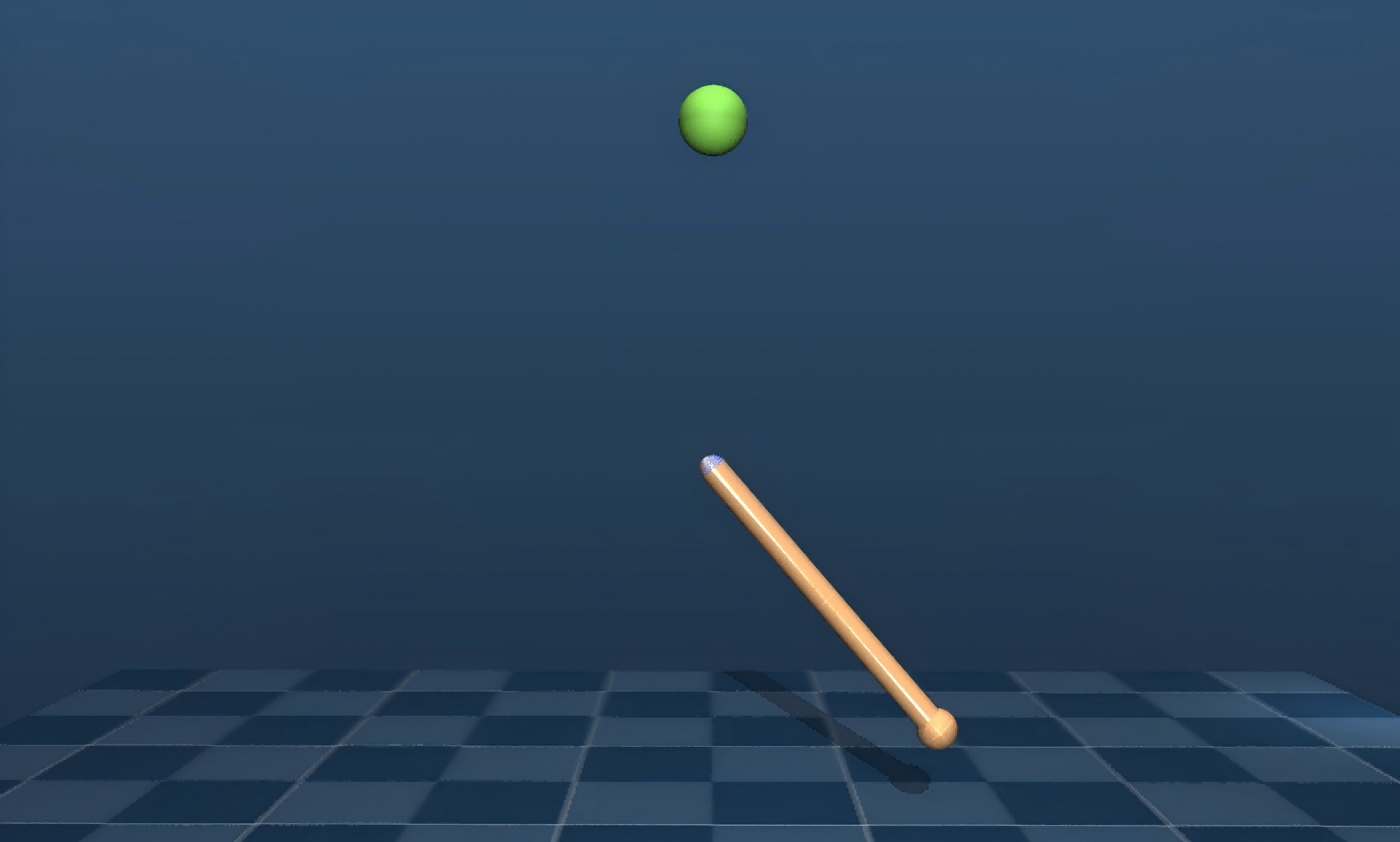}
    \caption{Pendulum}
    \label{fig:pendulum_bench}
  \end{subfigure}\hfill
  \begin{subfigure}[t]{0.49\linewidth}
    \centering
    \includegraphics[width=\linewidth]{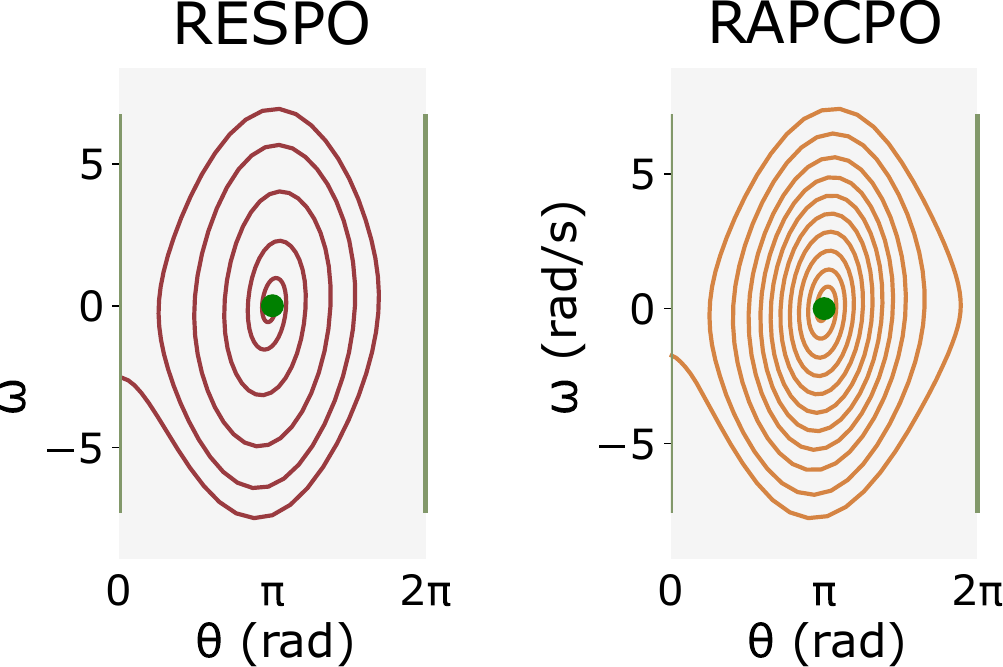}
    \caption{Trajectory}
    \label{fig:pendulum_traj}
  \end{subfigure}

  \vspace{-2pt}

  \begin{subfigure}[t]{0.98\linewidth}
    \centering
    \includegraphics[width=\linewidth]{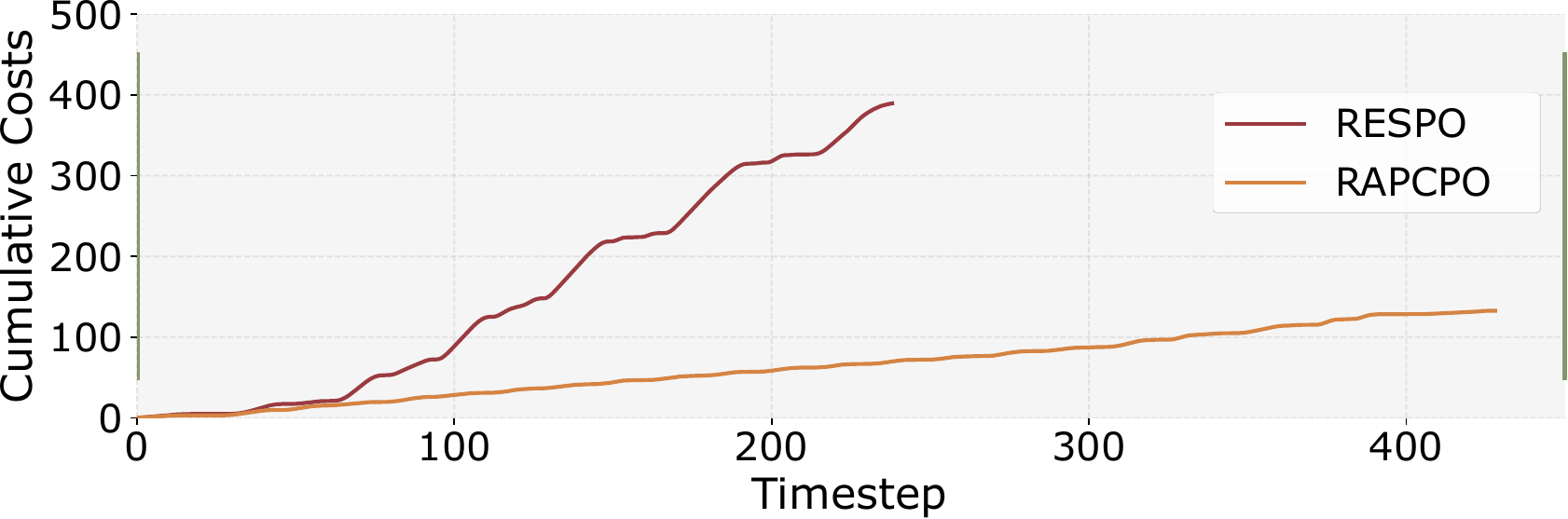}
    \caption{Cumulative cost}
    \label{fig:pendulum_cost}
  \end{subfigure}
  \caption{Pendulum results.
  RAPCPO achieves lower cumulative cost by favoring longer, smoother trajectories.}
  \label{fig:pendulum_data}
\end{figure}

\subsection{Stochastic Minimum-Cost Reach-Avoid}

Figure~\ref{fig:stochastic_benchmarks} illustrates the stochastic benchmark environments.
Under action noise, RAPCPO consistently outperforms all baselines on both Safety Hopper and Safety HalfCheetah (Figure~\ref{fig:stohastic_data}).
In particular, RAPCPO achieves substantially lower cumulative cost while maintaining higher reach rates.

As in the deterministic setting, Saut\'e and CPPO with CVaR constraints are overly conservative when costs are state-wise, leading to poor performance, while expectation-based constraints perform significantly better. And under the same iteration budget, RC-PPO similarly exhibits instability.
We therefore additionally report RC-PPO trained with twice the number of iterations.

\begin{figure}[htbp]
  \centering
  \begin{subfigure}[t]{0.49\columnwidth}
    \centering
    \includegraphics[width=\linewidth]{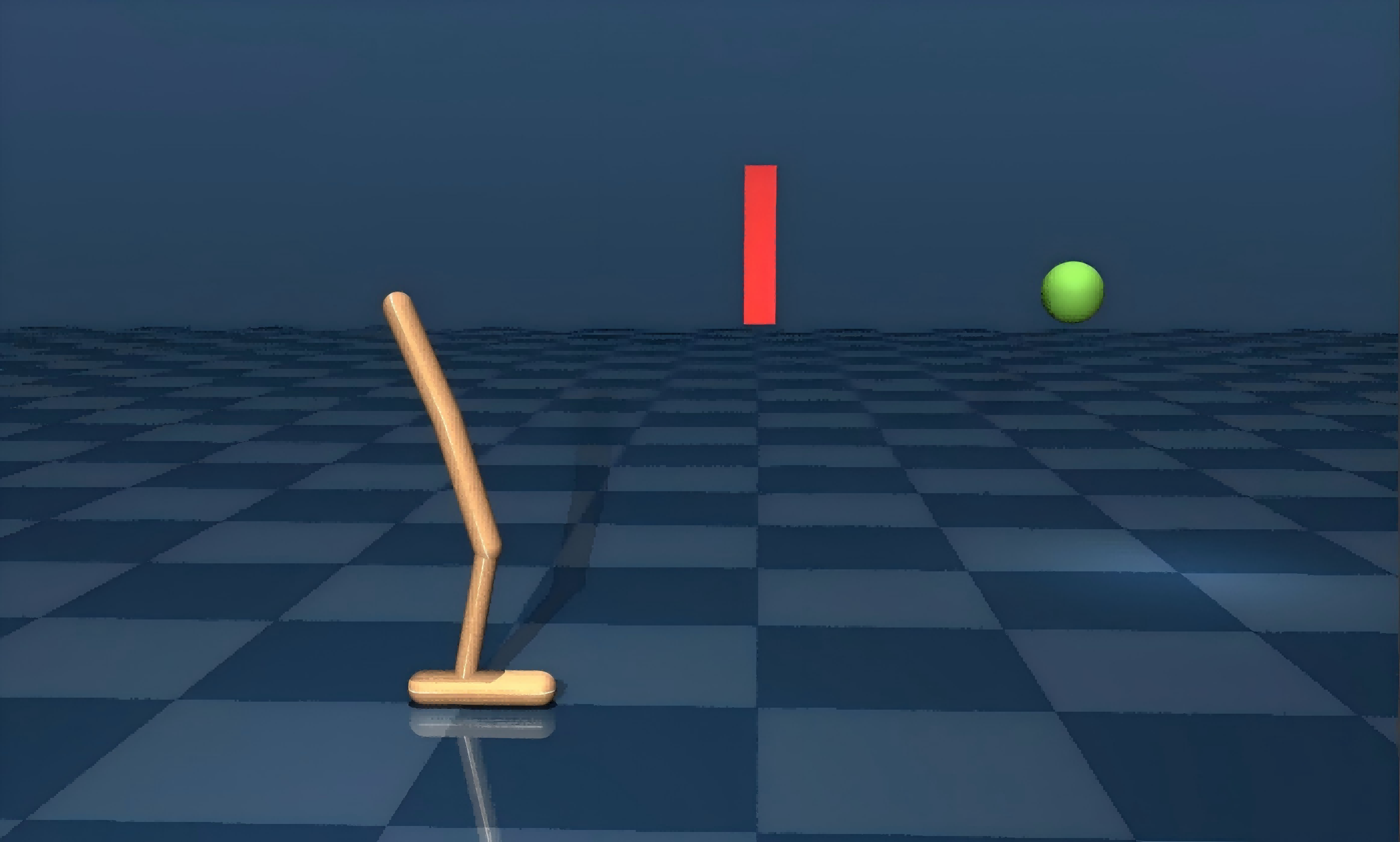}
    \caption{Safety Hopper}
    \label{fig:stoch_bench_hopper}
  \end{subfigure}\hfill
  \begin{subfigure}[t]{0.49\columnwidth}
    \centering
    \includegraphics[width=\linewidth]{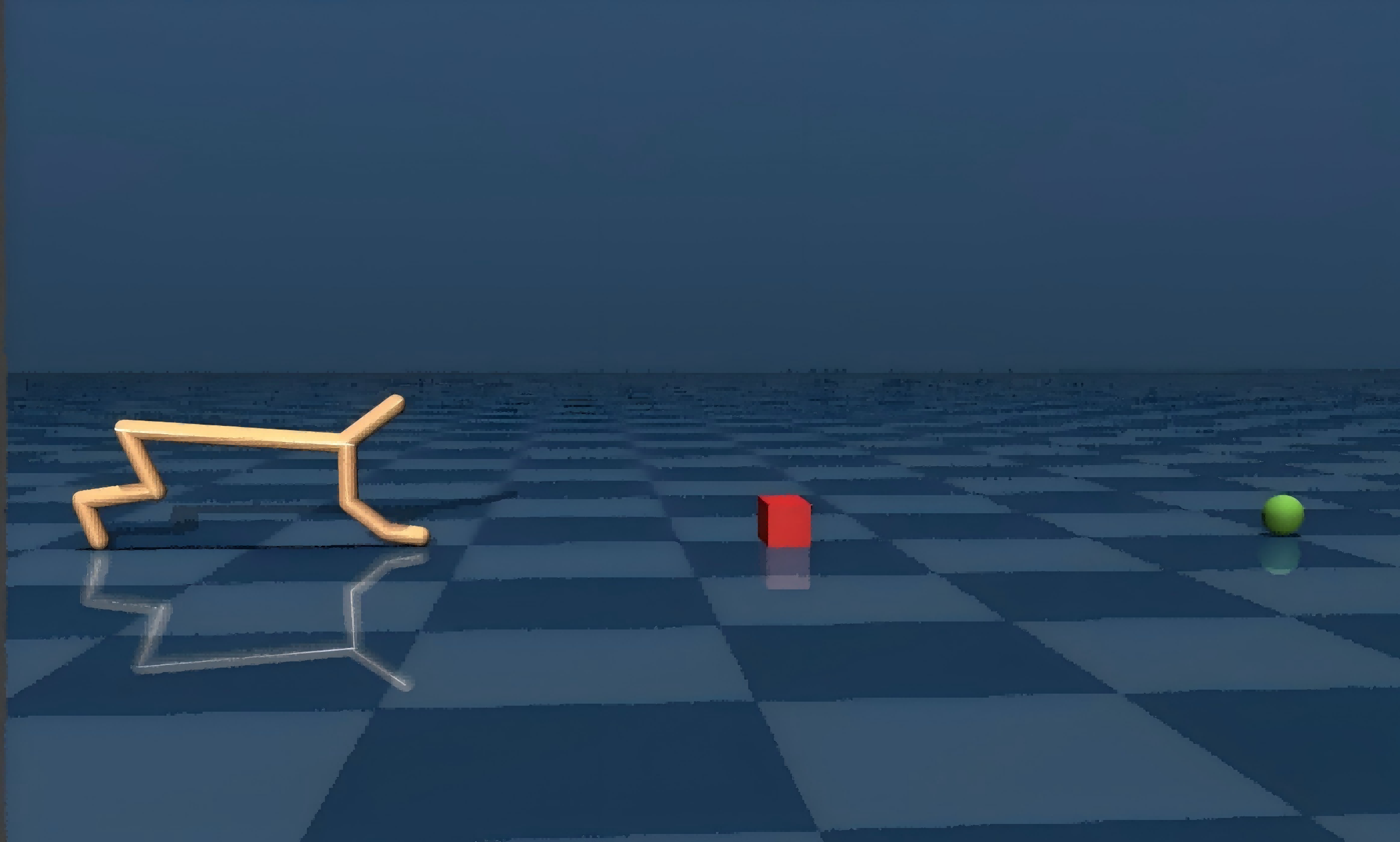}
    \caption{Safety HalfCheetah}
    \label{fig:stoch_bench_cheetah}
  \end{subfigure}
  \caption{Stochastic minimum-cost reach-avoid benchmark environments.}
  \label{fig:stochastic_benchmarks}
\end{figure}

\begin{figure}[htbp]
  \centering
  \begin{subfigure}[t]{0.49\columnwidth}
    \centering
    \includegraphics[width=\linewidth]{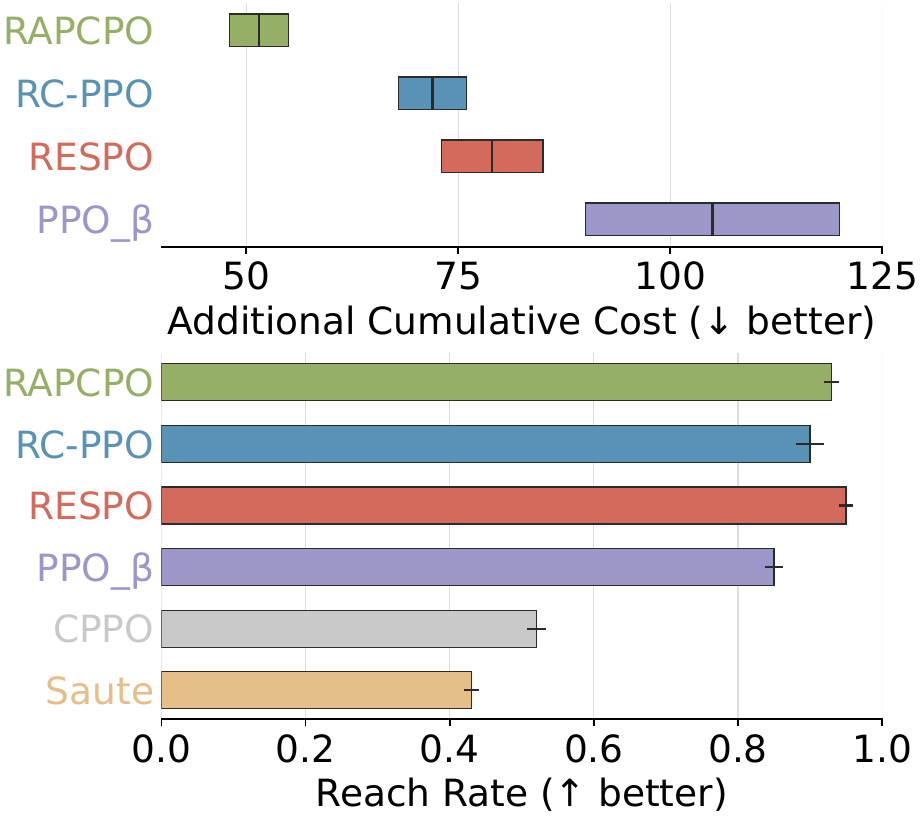}
    \caption{Safety Hopper}
    \label{fig:stoch_data_hopper}
  \end{subfigure}\hfill
  \begin{subfigure}[t]{0.49\columnwidth}
    \centering
    \includegraphics[width=\linewidth]{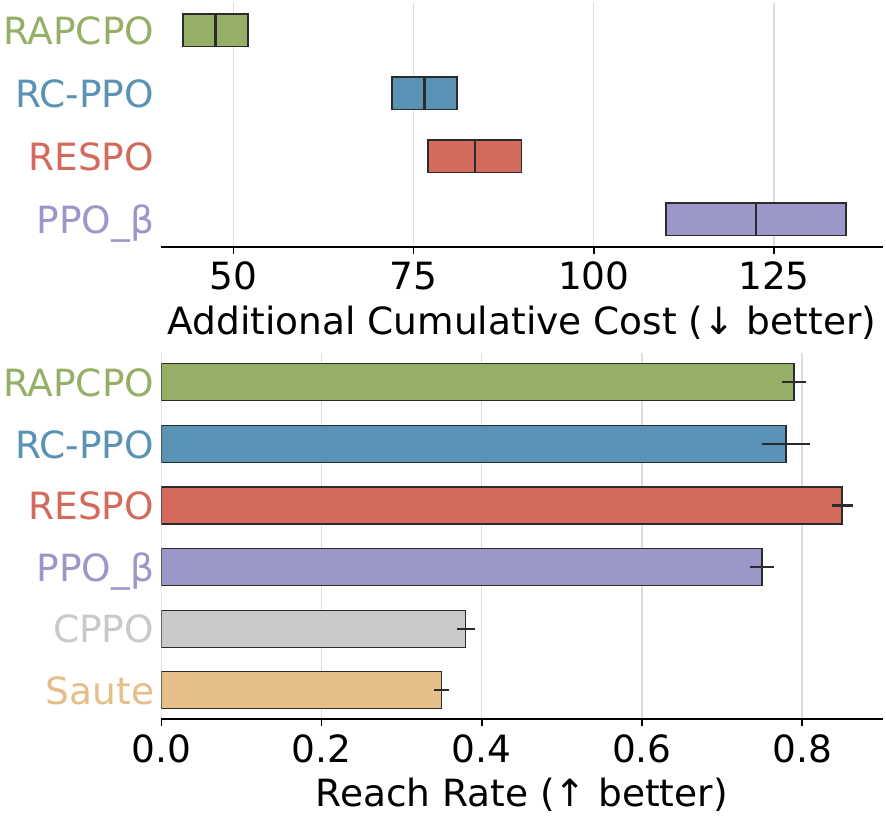}
    \caption{Safety HalfCheetah}
    \label{fig:stoch_data_cheetah}
  \end{subfigure}
  \caption{Cumulative cost and reach rate on stochastic benchmarks.
  RAPCPO achieves the best cost-success trade-off among all methods.}
  \label{fig:stohastic_data}
\end{figure}

\subsection{Ablation Study: Compensation Factor $\phi_\gamma^\pi$ and Hyperparameter $p$}

We isolate the auxiliary compensation factor on FrozenLake~\cite{brockman2016openaigym}, where the objective is to maximize the probability of safely reaching the goal.
Figure~\ref{fig:frozenlake} in Appendix~\hyperref[Appendix D.2]{D.2} compares the true reach-avoid probability with estimates obtained using the fixed-$\gamma$ Bellman equation in \eqref{eq:value_function_conditions}, with and without the compensation factor.
Incorporating the compensation factor yields a substantially less conservative and better-calibrated approximation.

We further evaluate the compensation factor on minimum-cost reach-avoid tasks using the enhanced Bellman formulation~\eqref{eq:enhanced_bellman}.
As shown in Figure~\ref{fig:phi}, incorporating the compensation factor achieves lower cumulative cost while maintaining a sufficiently high reach rate across tasks.

\begin{figure}[htbp]
  \centering
  \begin{subfigure}[t]{0.49\columnwidth}
    \centering
    \includegraphics[width=\linewidth]{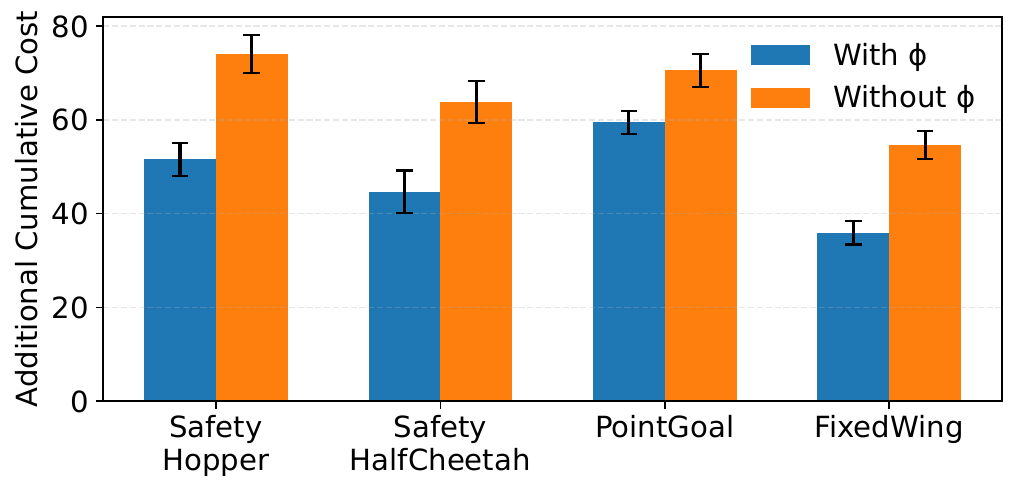}
    \caption{Additional cumulative cost}
    \label{fig:phi_cost}
  \end{subfigure}\hfill
  \begin{subfigure}[t]{0.49\columnwidth}
    \centering
    \includegraphics[width=\linewidth]{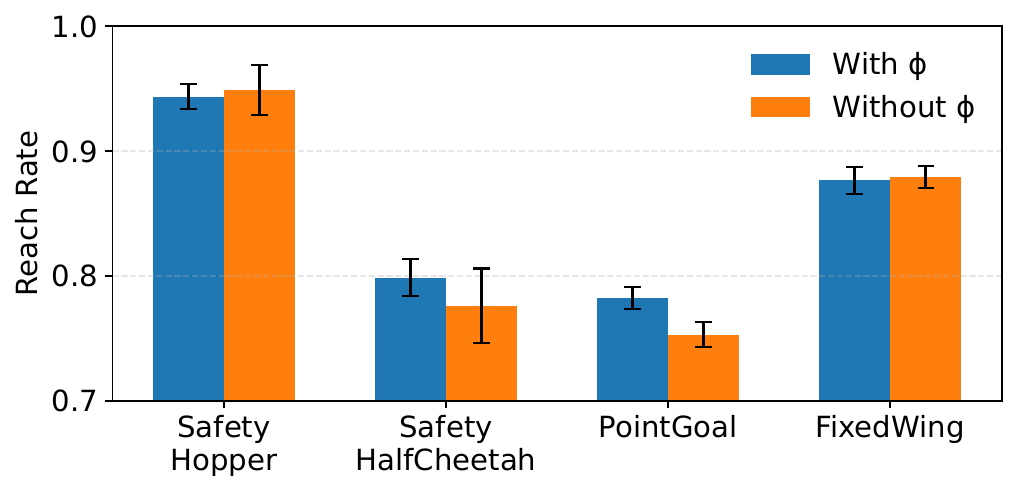}
    \caption{Reach rate}
    \label{fig:phi_reach}
  \end{subfigure}
  \caption{Impact of the compensation factor on cost and reach rate across tasks.}
  \label{fig:phi}
\end{figure}

Then, when both methods employ the compensation factor, we compare the fixed-$\gamma$ Bellman formulation~\eqref{eq:value_function_conditions} with the enhanced Bellman formulation~\eqref{eq:enhanced_bellman} for minimum costs reach-avoid problem.
Under the same iteration budget, the fixed-$\gamma$ Bellman formulation~\eqref{eq:value_function_conditions} suffers from severe reward sparsity and achieves low reach rates.
In contrast, the enhanced formulation substantially improves task completion across all environments (Table~\ref{tab:reach_rate_fullname}).

\begin{table}[htbp]
\caption{Reach rate comparison between the fixed-$\gamma$ Bellman formulation and the enhanced Bellman formulation under the same iteration budget.}
\label{tab:reach_rate_fullname}
\centering
\small
\resizebox{\columnwidth}{!}{
\begin{tabular}{lcccc}
\toprule
Method 
& Safety HalfCheetah
& Safety Hopper
& PointGoal
& FixedWing \\
\midrule
Fixed-$\gamma$ Bellman
& 0.4387 & 0.3218 & 0.4526 & 0.4713 \\
Enhanced Bellman
& \textbf{0.7986} & \textbf{0.9434} & \textbf{0.7821} & \textbf{0.8764} \\
\bottomrule
\end{tabular}
}
\end{table}
Finally, Figure \ref{fig:p} analyzes the effect of the hyperparameter p in the Safety Hopper environment. 
When $p = 0$, the reach-related signal is too weak to guide the algorithm toward policies satisfying the reach-avoid constraints, and the minimum-cost objective dominates, leading to near-stationary behaviors with very low reach rate and minimal cost. 
For $p$ in the range $[0.1, 0.7]$, sufficient reach-related signal is provided, enabling the learned policies to achieve high reach rates while keeping the additional cumulative cost moderate. 
However, due to the 10\% action noise in the Safety Hopper environment, larger 
$p$ values 0.8-1.0 are overly aggressive and bias the optimization toward reach rate at the expense of cost minimization. 
As shown in the cost plot (log scale), this results in a sharp increase in additional cumulative cost with only marginal gains in reach rate.

\begin{figure}[htbp]
  \centering
  \begin{subfigure}[t]{0.49\columnwidth}
    \centering
    \includegraphics[width=\linewidth]{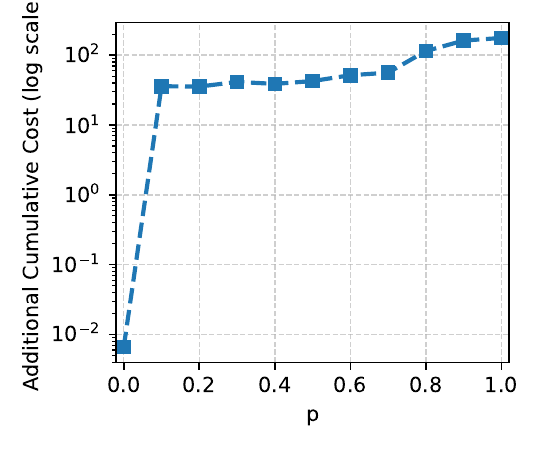}
    \caption{Cumulative cost(log scale)}
    \label{fig:phi_cost}
  \end{subfigure}\hfill
  \begin{subfigure}[t]{0.49\columnwidth}
    \centering
    \includegraphics[width=\linewidth]{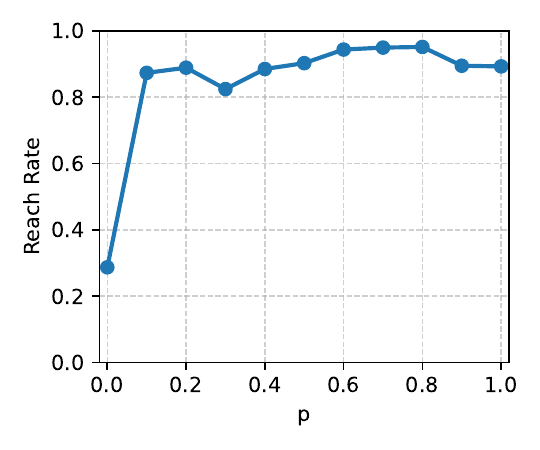}
    \caption{Reach rate}
    \label{fig:phi_reach}
  \end{subfigure}
  \caption{Ablation on the hyperparameter p in the Safety Hopper environment.
Right: reach rate as a function of $p$.
Left: additional cumulative cost as a function of $p$.
The cost is plotted on a logarithmic scale due to its wide dynamic range.}
  \label{fig:p}
\end{figure}

%% file: conclusions.tex
\section{Conclusion and Limitations}
\label{sec:conclusion}
This paper addressed the minimum-cost reach-avoid reinforcement learning problem in stochastic environments. We first proposed the Reach-Avoid Probability Certificate, and then established sufficient conditions within a Bellman recursive framework to ensure the existence of such certificates meeting the required threshold. Leveraging these theoretical foundations, 
we develop RAPCPO, a reinforcement learning algorithm inspired by this formulation and guided by a practical surrogate objective that empirically improves reach-avoid performance. We further show that RAPCPO converges almost surely to locally optimal policies under the proposed framework. Experimental evaluations show that RAPCPO significantly reduces both task costs and failure rates, demonstrating superior performance compared to existing baseline methods. 
Despite its advantages, RAPCPO also has inherent limitations. Notably, Theorem~\ref{thm:RAPC} provides only a sufficient condition, rather than a necessary and sufficient one. In future work, we plan to explore new conditions that are both sufficient and necessary, while remaining computationally tractable.

%% file: appendix/Appendix_A.tex
\section{Proofs}
\label{Appendix A}

\subsection{The proof for lemma \ref{lem:enhanced_bellman_contraction}}
\label{Appendix A.1}
\begin{proof} % Uses amsthm's proof environment
Let $\mathcal{B}(\mathbb{R}^n)$ denote the space of all bounded functions mapping from $\mathbb{R}^n$ to $\mathbb{R}$. That is,
\[
\mathcal{B}(\mathbb{R}^n) = \left\{ V: \mathbb{R}^n \to \mathbb{R} \mid \sup_{x \in \mathbb{R}^n} |V(x)| < \infty \right\}.
\]
It is a standard result that the space $\mathcal{B}(\mathbb{R}^n)$ of bounded functions, endowed with the supremum norm $\|V\|_{\infty} = \sup_{x \in \mathbb{R}^n} |V(x)|$, constitutes a Banach space \cite{folland1999real}.

First, we verify that $B^\pi$ maps $\mathcal{B}(\mathbb{R}^n)$ to itself. Let $V \in \mathcal{B}(\mathbb{R}^n)$ with $\|V\|_{\infty} \leq M$. Since $h$ and $g$ are assumed bounded, let $\|h\|_{\infty} \leq M_h$ and $\|g\|_{\infty} \leq M_g$. Let $E[V](x) = \gamma \cdot \mathbb{E}_{a \sim \pi, x' \sim P(\cdot|x,a)}[V(x')]$. We have:
\[
|E[V](x)| = \left| \gamma \mathbb{E}[V(x')] \right| \leq \gamma \mathbb{E}[|V(x')|] \leq \gamma \mathbb{E}[\|V\|_{\infty}] = \gamma \|V\|_{\infty} \leq \gamma M.
\]
The term $\min\{g(x), E[V](x)\}$ is therefore bounded:
\[
|\min\{g(x), E[V](x)\}| \leq \max\{|g(x)|, |E[V](x)|\} \leq \max\{M_g, \gamma M\}.
\]
Finally, $B^\pi[V](x) = \max\{h(x), \min\{g(x), E[V](x)\}\}$ is bounded:
\[
|B^\pi[V](x)| \leq \max\{|h(x)|, |\min\{g(x), E[V](x)\}|\} \leq \max\{M_h, \max\{M_g, \gamma M\}\}.
\]
Since the bound does not depend on $x$, $\|B^\pi[V]\|_{\infty}$ is finite, and thus $B^\pi[V] \in \mathcal{B}(\mathbb{R}^n)$.

Next, we prove the contraction property. Let $V_1, V_2 \in \mathcal{B}(\mathbb{R}^n)$. For any $x \in \mathbb{R}^n$:
\begin{align*}
| B^\pi[V_1](x) - B^\pi[V_2](x) | &= \left| \max \{ h(x), \min \{ g(x), E[V_1](x) \} \} - \max \{ h(x), \min \{ g(x), E[V_2](x) \} \} \right| \\
&\leq \left| \min \{ g(x), E[V_1](x) \} - \min \{ g(x), E[V_2](x) \} \right| \\
& \qquad \text{(Using the property $|\max(a, c) - \max(b, c)| \leq |a - b|$)} \\
&\leq | E[V_1](x) - E[V_2](x) | \\
& \qquad \text{(Using the property $|\min(a, c) - \min(b, c)| \leq |a - b|$)} \\
&= \left| \gamma \mathbb{E}_{a \sim \pi, x' \sim P(\cdot|x,a)}[V_1(x')] - \gamma \mathbb{E}_{a \sim \pi, x' \sim P(\cdot|x,a)}[V_2(x')] \right| \\
&= \gamma \left| \mathbb{E}_{a \sim \pi, x' \sim P(\cdot|x,a)}[V_1(x') - V_2(x')] \right| \quad \text{(Linearity of Expectation)} \\
&\leq \gamma \mathbb{E}_{a \sim \pi, x' \sim P(\cdot|x,a)}[|V_1(x') - V_2(x')|] \quad \text{(Using } |\mathbb{E}[X]| \leq \mathbb{E}[|X|]) \\
&\leq \gamma \mathbb{E}_{a \sim \pi, x' \sim P(\cdot|x,a)}[\|V_1 - V_2\|_{\infty}] \quad \text{(Definition of sup norm)} \\
&= \gamma \|V_1 - V_2\|_{\infty} \cdot \mathbb{E}_{a \sim \pi, x' \sim P(\cdot|x,a)}[1] \\
&= \gamma \|V_1 - V_2\|_{\infty}. \quad \text{(Expectation of constant is constant)}
\end{align*}
Since the inequality $| B^\pi[V_1](x) - B^\pi[V_2](x) | \leq \gamma \|V_1 - V_2\|_{\infty}$ holds for all $x \in \mathbb{R}^n$, we can take the supremum over $x$ on the left side:
\[
\|B^\pi[V_1] - B^\pi[V_2]\|_{\infty} = \sup_{x \in \mathbb{R}^n} | B^\pi[V_1](x) - B^\pi[V_2](x) | \leq \gamma \|V_1 - V_2\|_{\infty}.
\]
Given that $\gamma \in (0, 1)$, the operator $B^\pi$ is a contraction mapping on the Banach space $(\mathcal{B}(\mathbb{R}^n), \|\cdot\|_{\infty})$.

By the Banach Fixed-Point Theorem, a contraction mapping on a complete metric space has a unique fixed point. Therefore, there exists a unique function $V^\pi_{g,h} \in \mathcal{B}(\mathbb{R}^n)$ such that $V^\pi_{g,h} = B^\pi[V^\pi_{g,h}]$.
\end{proof} % Automatically inserts \qed symbol

\subsection{The proof for Theorem \ref{thm:RAPC}}
\label{Appendix A.2}
Firstly, we introduce a proposition; the proof can be found in~\cite{xue2024sufficient}.
\begin{proposition}
Under Assumption~2, there exist a constant $\gamma \in (0,1)$ and a function
$v(x): \mathbb{R}^n \to \mathbb{R}$, which is bounded over $\mathcal{X}$ and
satisfies the following condition:
\begin{equation}
\left\{
\begin{aligned}
&v(x_0) \ge \epsilon_2,\\
&v(x) \le \gamma \,\mathbb{E}_{\theta}\!\big[v(f(x,\theta))\big], && \forall x \in \mathcal{X}\setminus \mathcal{X}_r,\\
&v(x) \le 1, && \forall x \in \mathcal{X}_r,\\
&v(x) \le 0, && \forall x \in \mathbb{R}^n \setminus \mathcal{X},
\end{aligned}
\right.
\label{eq:uneq_con}
\end{equation}
if and only if $\mathbb{P}_{\pi}(RA_{x_0}) \ge \epsilon_2$.
\end{proposition}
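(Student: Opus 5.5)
The plan is to prove the two directions separately, reading $\mathcal{X}_r$ as the target set, $\mathbb{R}^n\setminus\mathcal{X}$ as the region to be avoided, and $x_{t+1}=f(x_t,\theta_t)$ with i.i.d.\ disturbances $\theta_t$ as the closed-loop dynamics under $\pi$. Both directions rest on the stopping time
\[
\tau:=\inf\{t\ge 0:\ x_t\in\mathcal{X}_r \text{ or } x_t\notin\mathcal{X}\}.
\]
The reach-avoid event is then exactly $\{\tau<\infty,\ x_\tau\in\mathcal{X}_r\}$.

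\emph{Sufficiency} (the inequalities imply $\mathbb{P}_\pi(RA_{x_0})\ge\epsilon_2$). I will show that the stopped process $Y_t:=\gamma^{t\wedge\tau}v(x_{t\wedge\tau})$ is a supermartingale.
\begin{itemize}
\item On $\{\tau>t\}$ we have $x_t\in\mathcal{X}\setminus\mathcal{X}_r$, so the second inequality gives $\mathbb{E}[Y_{t+1}\mid x_t]\ge Y_t$.
\item On $\{\tau\le t\}$ the process is frozen, so $Y_{t+1}=Y_t$.
\end{itemize}
Iterating from $t=0$ yields $v(x_0)\le \mathbb{E}[Y_t]$ for every $t$. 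I then split $\mathbb{E}[Y_t]$ over three events.
\begin{itemize}
\item On $\{\tau\le t,\ x_\tau\in\mathcal{X}_r\}$: here $\gamma^\tau v(x_\tau)\le 1$. This holds when $v(x_\tau)\ge0$ because $\gamma^\tau\le 1$ and $v\le 1$ on $\mathcal{X}_r$, and trivially when $v(x_\tau)<0$.
\item On $\{\tau\le t,\ x_\tau\notin\mathcal{X}\}$: here $\gamma^\tau v(x_\tau)\le 0$.
\item On $\{\tau>t\}$: here $|Y_t|\le\gamma^t\sup_{\mathcal{X}}|v|$, using boundedness of $v$ over $\mathcal{X}$.
\end{itemize}
Combining the three cases gives
\[
\epsilon_2\le v(x_0)\le \mathbb{P}(\tau\le t,\ x_\tau\in\mathcal{X}_r)+\gamma^t\sup_{\mathcal{X}}|v|.
\]
Letting $t\to\infty$ gives $\epsilon_2\le\mathbb{P}_\pi(RA_{x_0})$.

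\emph{Necessity.} Define
\[
v_\gamma(x):=\mathbb{E}_x\big[\gamma^{\tau}1_{\{\tau<\infty,\ x_\tau\in\mathcal{X}_r\}}\big]
\]
on $\mathcal{X}$ and $v_\gamma:=0$ outside $\mathcal{X}$. A one-step (strong Markov) decomposition shows that $v_\gamma$ satisfies the system with equalities:
\begin{itemize}
\item $v_\gamma=1$ on $\mathcal{X}_r$;
\item $v_\gamma=0$ off $\mathcal{X}$;
\item $v_\gamma(x)=\gamma\,\mathbb{E}_\theta[v_\gamma(f(x,\theta))]$ on $\mathcal{X}\setminus\mathcal{X}_r$.
\end{itemize}
It is also bounded by $1$. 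As $\gamma\uparrow1$, monotone convergence gives $v_\gamma(x_0)\uparrow\mathbb{P}_\pi(RA_{x_0})$.

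The main obstacle is this limiting step. Under the standing assumption the inequality $\mathbb{P}_\pi(RA_{x_0})>\epsilon_2$ is strict, so some $\gamma<1$ achieves $v_\gamma(x_0)\ge\epsilon_2$. If only $\mathbb{P}_\pi(RA_{x_0})=\epsilon_2$ held, then $v_\gamma(x_0)<\epsilon_2$ for every $\gamma<1$ whenever $\tau>0$ with positive probability, and no such $\gamma$ would exist. So the ``only if'' direction must genuinely use the assumption.

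A minor technical point is measurability of $v_\gamma$, which follows from its definition as an expectation of a measurable functional of the path. With that, the equivalence is established.
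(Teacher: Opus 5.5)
Your proof is correct and follows essentially the route behind this result, which the paper does not prove itself but takes from \cite{xue2024sufficient}. Sufficiency comes from iterating the discounted inequality along the trajectory stopped at the first target/exit time, and necessity from the discounted reach-avoid value $\mathbb{E}_x[\gamma^{\tau}1_{\{\tau<\infty,\,x_\tau\in\mathcal{X}_r\}}]$ (which solves the system with equality, cf.\ Proposition~\ref{prop:xue}) with $\gamma\uparrow 1$; as you correctly point out, this needs the strict inequality $\mathbb{P}_\pi(RA_{x_0})>\epsilon_2$ from the assumption. One cosmetic point: since $\mathbb{E}[Y_{t+1}\mid x_0,\dots,x_t]\ge Y_t$, your stopped process is a submartingale rather than a supermartingale.
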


Secondly, we prove that  $V_{g,h}^\pi(x)$ satisfying the following conditions:
\begin{equation} \label{eq:thm_V_revised} 
\begin{cases}
V_{g,h}^\pi(x) \geq -M, & \text{if } x \in \mathcal{T}, \\
V_{g,h}^\pi(x) \geq 0, & \text{if } x \in \mathcal{F}, \\
V_{g,h}^\pi(x) \geq \gamma \mathbb{E}_{a \sim \pi(\cdot|x), x' \sim P(\cdot|x,a)}[V_{g,h}^\pi(x')], & \text{if } x \in \{\mathcal{X} \setminus (\mathcal{T} \cup \mathcal{F})\} \cap \{x|V_{g,h}^\pi(x)<0\}.
\end{cases}
\end{equation}
\begin{proof}
By the definitions of $g$, $h$, and $V_{g,h}^\pi(x)$, we have $-M \le g$, $-M \le h$, and 
\begin{equation}
\label{bellman equation}
V_{g,h}^\pi(x) = \max\left\{ h(x),\min \left\{ g(x), \gamma \cdot \mathbb{E}_{\pi,x'} \left[V_{g,h}^\pi(x') \right]\right\} \right\}\geq h(x) \geq -M
\end{equation}
According to Lemma \ref{lem:enhanced_bellman_contraction}, the Bellman equation \eqref{bellman equation} is a contraction mapping and has a unique solution, denoted as $V_{g,h}^\pi(x)$. 

\noindent\textbf{Step 1: For $x \in \mathcal{T}$}

\[ V_{g,h}^\pi(x)\geq h(x)= -M
\]

\noindent\textbf{Step 2: For $x \in \mathcal{F}$}

For $x \in \mathcal{F}$, we have $h(x) > 0 ,g(x)>0$. By the definition of $V_{g,h}^\pi(x) $, we know that

\[
V_{g,h}^\pi(x) \geq h(x) >0
\]

\noindent\textbf{Step 3: For $x \notin \mathcal{T} \cup \mathcal{F}$ with $V_{g,h}^\pi(x)<0$}

By the definition of $V_{g,h}^\pi(x)$, $V_{g,h}^\pi(x)<0$, $h(x)=M$ if $x \in \mathcal{F}$, $h(x)=-M$ otherwise, $g(x)>0$ if $x \notin \mathcal{T}$, and $\gamma \in (0,1)$,
we have: 

\begin{align}
& 0 >\gamma \cdot \mathbb{E}_{\pi,x'}\left[V_{g,h}^\pi(x') \right]\ge\gamma h(x')\ge \gamma M >-M\Rightarrow\min \left\{ g(x), \gamma \cdot \mathbb{E}_{\pi,x'} \left[V_{g,h}^\pi(x') \right]\right\}>h(x)\\
\Rightarrow& \max\left\{ h(x),\min \left\{ g(x), \gamma \cdot \mathbb{E}_{\pi,x'} \left[V_{g,h}^\pi(x') \right]\right\} \right\} = \min \left\{ g(x), \gamma \cdot \mathbb{E}_{\pi,x'} \left[V_{g,h}^\pi(x') \right]\right\} 
\end{align}
If $V_{g,h}^\pi(x) = g(x)$, this conflicts with the conditions $V_{g,h}^\pi(x) < 0$ and $g(x) > 0$. Therefore, under these conditions, it must be that $V_{g,h}^\pi(x) = \gamma \cdot \mathbb{E}_{\pi,x'} \left[V_{g,h}^\pi(x') \right]$.
\end{proof}
So, $\frac{-V_{g,h}^\pi(x)}{M}$ satisfies~\eqref{eq:uneq_con} on the set
$\mathcal{X}\setminus \{x \mid V_{g,h}^\pi(x)>0\}$. If $\frac{1}{M} V_{g,h}^\pi(x) < 0$, then the following equation holds:

\begin{equation}
    \mathbb{P}_{\pi}(\mathbf{RA}_{x})
\;\ge\;
-\frac{1}{M}V_{g,h}^\pi(x).
\end{equation}

%% file: appendix/Appendix_B.tex
\section{Gradients Derivation and Algorithm Convergence}
\label{Appendix B}

\subsection{Gradient estimates}
\label{Appendix B.1}
For this stochastic value function \eqref{eq:enhanced_bellman}, the Q function \cite{sutton1998reinforcement} is defined as
\begin{align*}
Q^{\pi}_{g,h}(x,a) = \max\Big\{ h(x), \min\big\{ g(x), \gamma \mathbb{E}_{x' \sim P(x'|x,a)} [V_{g,h}^\pi(x')] \big\} \Big\}.
\end{align*}
The Q value losses are based on the MSE between the Q networks and TD targets, gradients of which are shown below:
\begin{align}
    &\hat{\nabla}_\eta J_{Q}(\eta) = \nabla_{\eta} Q_{g,h}(x_t, a_t) \cdot \Big[ Q_{g,h}(x_t, a_t) - \max\Big\{h(x_t),\, \min\big\{g(x_t),\, \gamma Q_{g,h}(x_{t+1}, a_{t+1})\big\}\Big\} \Big] \\
    &\hat{\nabla}_\kappa J_{Q_c}(\kappa) = \nabla_{\kappa} Q_c(x_t, a_t) \cdot [Q_c(x_t, a_t) - (c(x_t,a _t) + \gamma Q_c(x_{t+1}, a_{t+1}))]
\end{align}

We show the gradient update of compensation factor is:
\begin{equation}
    \hat{\nabla}_\xi J_{\phi}(\xi)
    = \nabla_{\xi}\phi_{\xi}(x_t)\cdot\bigl(\phi_{\xi}(x_t)-y_t\bigr).
\end{equation}

then we compute the gradient with respect to $\theta$. First, we propose an auxiliary MDP\cite{so2024solving} that facilitates the computation of gradients.

\begin{definition}[Absorbing Markov Decision Process]
    \label{def:reach_markov}
The Absorbing Markov Decision Process is defined on f with an added absorbing state $s$. We define the transition function $f_r'$ with the absorbing state as
    \begin{equation}
        f_r'(x, a) = \begin{cases}
            f(x,a), \quad \quad &\text{if $g(x)\geq V_{g, h}^{\pi}(x) \geq h(x)$}, \\
            s, \quad \quad &\text{otherwise} .
        \end{cases}
    \end{equation}
    Denote by $d'_{\pi}(x)$ the stationary distribution under stochastic policy $\pi$.
\end{definition}

Now we give a policy gradient theorem for the absorbing MDP above.

\begin{theorem}{(Policy Gradient Theorem)}
\label{thm:policy_gradient}
For policy $\pi_{\theta}$ parameterized by $\theta$, the gradient of the policy value function $V^{\pi_{\theta}}_{g,h}$ satisfies
    \begin{align}
        \begin{split}
            & \nabla_\theta V_{g, h}^{\pi_\theta}\left(x\right) \propto \mathbb{E}_{x' \sim d'_{\pi}(x), a \sim \pi_\theta}\left[ Q^{\pi_\theta}\left(x', a\right) \nabla_\theta \ln \pi_\theta\left(a \mid x'\right) \right] ,
        \end{split}
    \end{align}
under the stationary distribution $d_{\pi}^{'}(x)$ for Reachability MDP in Definition \ref{def:reach_markov}.
\end{theorem}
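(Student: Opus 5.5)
This is the policy gradient theorem for the absorbing MDP of Definition~\ref{def:reach_markov}. I would mirror the classical Sutton et al.\ argument, using the absorbing construction to make the max--min Bellman recursion linear wherever it matters.

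\textbf{Step 1: split the state space into regions.} At each $x$, the fixed point $V_{g,h}^{\pi_\theta}(x)=B^{\pi_\theta}[V_{g,h}^{\pi_\theta}](x)$ from Lemma~\ref{lem:enhanced_bellman_contraction} is in one of two regimes.
\begin{itemize}
\item \emph{Clamped region:} $V_{g,h}^{\pi_\theta}(x)=h(x)$ or $V_{g,h}^{\pi_\theta}(x)=g(x)$. Here the value equals a fixed boundary quantity that does not depend on $\theta$.
\item \emph{Unclamped region:} $g(x)\ge V_{g,h}^{\pi_\theta}(x)\ge h(x)$ holds with the inner term active. Here $V_{g,h}^{\pi_\theta}(x)=\sum_a \pi_\theta(a\mid x)\,Q_{g,h}^{\pi_\theta}(x,a)$, with $Q_{g,h}^{\pi_\theta}(x,a)=\gamma\,\mathbb{E}_{x'\sim P(\cdot\mid x,a)}[V_{g,h}^{\pi_\theta}(x')]$.
\end{itemize}
The absorbing MDP $f_r'$ sends clamped states to the absorbing state $s$, where the value is frozen. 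So on the non-absorbed region the recursion is the usual linear one (zero reward, discount $\gamma$), and on the absorbed region it is constant in $\theta$.

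\textbf{Step 2: differentiate the recursion.} I would assume the active region is locally constant in $\theta$, e.g.\ by restricting to $\theta$ where no state sits exactly on a switching boundary (a measure-zero / generic condition). On the unclamped region the product rule gives
\[
\nabla_\theta V(x)=\sum_a \nabla_\theta\pi_\theta(a\mid x)\,Q(x,a)+\gamma\sum_a\pi_\theta(a\mid x)\sum_{x'}P(x'\mid x,a)\,\nabla_\theta V(x').
\]
On the clamped region, $\nabla_\theta V(x)=0$. This matches the transition to $s$: the absorbed state contributes no further gradient.

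\textbf{Step 3: unroll and normalize.} Unrolling the recursion gives
\[
\nabla_\theta V(x)=\sum_{x'}\sum_{k\ge 0}\gamma^k\,\Pr{}'(x\to x',k)\sum_a\nabla_\theta\pi_\theta(a\mid x')\,Q(x',a),
\]
where $\Pr'$ is the transition probability in the absorbing MDP. The series converges because $\gamma<1$ and $Q$ is bounded by Lemma~\ref{lem:enhanced_bellman_contraction}. Normalizing the discounted occupancy measure identifies it with $d'_\pi$ up to a positive constant, which gives the ``$\propto$''. The log-derivative trick $\nabla\pi=\pi\nabla\ln\pi$ then yields the stated expectation.

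\textbf{Main obstacle.} The hard part is Step 2: the max/min makes $V$ only piecewise smooth in $\theta$. I expect to need either the genericity assumption above or to interpret the result as a (Clarke) generalized gradient valid almost everywhere. The latter is consistent with the differential-inclusion convergence analysis used later.
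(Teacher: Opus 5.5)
Your proposal follows essentially the same route as the paper's proof: the product rule on $V=\sum_a \pi_\theta Q$, the max/min derivative vanishing on clamped states (handled by sending them to the absorbing state $s$), unrolling along the absorbing chain, and normalizing the resulting occupancy measure into $d'_\pi$. If anything it is more careful than the paper, which differentiates through the max/min with no regularity assumption and omits the factor $\gamma^k$ in its unrolled sum; also, your unclamped $Q(x,a)=\gamma\,\mathbb{E}_{x'}[V(x')]$ on non-absorbed states is what the operator of Lemma~\ref{lem:enhanced_bellman_contraction} (which clamps after averaging over $a$) actually gives, whereas the paper's per-action clamped $Q_{g,h}^{\pi}$ need not satisfy $V=\sum_a\pi Q$.
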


\begin{proof}
    \label{prof:policy_gradient}
    \begin{equation}
    \begin{aligned}
        \nabla_\theta V_{g,h}^{\pi_\theta}(x) = & \nabla_\theta\left(\sum_{a \in \mathcal{U}} \pi_\theta(a \mid x) Q_{g,h}^{\pi_\theta}(x, a)\right) \\
        = & \sum_{a \in \mathcal{A}} \Bigg (\nabla_\theta \pi_\theta(a \mid x) Q_{g,h}^{\pi_\theta}(x, a)+\pi_\theta(a \mid x) \nabla_\theta Q_{g,h}^{\pi_\theta}(x, a) \Bigg) \\
        = & \sum_{a \in \mathcal{A}} \Bigg (\nabla_\theta \pi_\theta(a \mid x) Q_{g,h}^{\pi_\theta}(x, a) \\
        & \quad \qquad + \pi_\theta(a \mid x) \nabla_\theta \max \{h(x),\min\{g(x), \gamma V_{g,h}^{\pi}(x')\}\}\Bigg ) \\
        = & \sum_{a \in \mathcal{A}}\Bigg (\nabla_\theta \pi_\theta(a \mid x) Q_{g,h}^{\pi_\theta}(x, a) \\
        & \quad \qquad + \pi_\theta(a \mid x) 1_{x' \in \{x' \mid \max \left[h(x),\min\left[g(x), \gamma  V_{g,h}^{\pi}(x')]\right]\right]=V_{g,h}^{\pi_\theta}(x')\}}\gamma \nabla_\theta 
 V_{g,h}^{\pi_\theta}(x') \Bigg )
    \end{aligned}
    \end{equation}
where $x' = f_r'(x, a)$

By unrolling $V_{g,h}^{\pi_\theta}(x')$ under extended MDP in Definition~\ref{def:reach_markov} and defining $\mathbb{P}(x \rightarrow x^{\star}, k, \pi_{\theta})$ as the probability of transitioning from state $x$ to $x^{\star}$ in $k$ steps under policy $\pi_{\theta}$ in \ref{def:reach_markov}, we can get 

\begin{align}
    \nabla_\theta V_{g,h}^{\pi_\theta}(x) & = \sum_{x^{\star} \in \mathcal{X}}\left(\sum_{k=0}^{\infty} \mathbb{P}\big (x \rightarrow x^{\star}, k, \pi \big ) \right) \sum_{a \in \mathcal{A}} \nabla \pi_{\theta}(a \mid x^{\star}) Q_{g,h}^{\pi_\theta}(x^{\star}, a) \\
    &\propto \mathbb{E}_{x' \sim d'_{\pi}(x), a \sim \pi_\theta}\left[Q^{\pi_\theta}_{g,h}\left(x', a\right) \nabla_\theta \ln \pi_\theta\left(a \mid x'\right) \right]
\end{align}

Note that $1_{x' \in \{x' \mid \max \left[h(x),\min\left[g(x), \gamma  V_{g,h}^{\pi}(x')]\right]\right]=V_{g,h}^{\pi_\theta}(x')\}}$ is absorbed using the absorbing state in Definition~\ref{def:reach_markov}.

\end{proof}

Similarly, we could have:
\begin{align}
        \begin{split}
            & \nabla_\theta V_{g, h}^{\pi_\theta}\left(x_t\right) = \mathbb{E}_{x' \sim d'_{\pi}(x), a \sim \pi_\theta}\left[\gamma^t Q^{\pi_\theta}\left(x', a\right) \nabla_\theta \ln \pi_\theta\left(a \mid x'\right) \right] ,
        \end{split}
    \end{align}
under the stationary distribution $d_{\pi}^{'}(x)$ for Reachability MDP in Definition \ref{def:reach_markov}.

\subsection{Convergence Analysis}
\label{Appendix B.2}
We provide convergence analysis of our algorithm for Finite MDPs. We demonstrate our algorithm almost surely finds a locally optimal policy for our RAPC-PPO formulation, based on the following assumptions:
   
- \textbf{A1 (Step size):} Step sizes follow schedules $\{\zeta_1(k)\}$, $\{\zeta_2(k)\}$, $\{\zeta_3(k)\}$ where:
    \[
        \sum_k \zeta_i(k) = \infty \quad \text{and} \quad \sum_k \zeta_i(k)^2 < \infty, \forall i \in \{1, 2, 3\},
    \]
    and
    \[
        \zeta_j(k) = o(\zeta_{j-1}(k)), \forall j \in \{2, 3\}.
    \]
The cost returns critic value functions and RAPC must follow the fastest schedule $\zeta_1(k)$, the policy must follow the second fastest schedule $\zeta_2(k)$, the compensation factor must follow the second slowest schedule $\zeta_3(k)$.
    
- \textbf{A2 (Differentiability and Lipschitz Continuity):} For all state-action pairs $(s, a)$, we assume RAPC and cost Q functions $V_{g,h}(s; \eta)$, $V_c(s; \kappa)$, policy $\pi(a|s; \theta)$, and compensation factor $\phi(s; \xi)$ are continuously differentiable in $\eta$, $\kappa$, $\theta$, $\xi$ respectively. Furthermore, $\nabla_\theta \pi(a|s; \theta)$ is Lipschitz continuous functions in $\theta$.

- \textbf{A3 (Strict Feasibility):} $\exists \pi(\cdot|\cdot; \theta)$ such that $\forall x \in \mathcal{X}^-$ where $\phi_{\gamma}^{\pi}(x) > 0$, $V_{g,h}^{\pi}(s) \leq - \epsilon \cdot \phi_{\gamma}^{\pi}(x)$.

\begin{theorem}
Given Assumptions A1-A3, the policy updates in Algorithm 1 will almost surely converge to a locally optimal policy for our proposed optimization in Equation RAPC-PPO.
\end{theorem}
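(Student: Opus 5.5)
We would follow the standard multi-timescale stochastic approximation template (Borkar; as used for RC-PPO and RESPO), working on a finite MDP with tabular or linear-sufficient critics so that the critic fixed points are well defined. By A1, $\zeta_2=o(\zeta_1)$ and $\zeta_3=o(\zeta_2)$. So the RAPC critic $\eta$ and cost critic $\kappa$ evolve on the fastest timescale and see $\theta,\xi$ as quasi-static. The actor $\theta$ sees the critics as already converged and $\xi$ as frozen. The compensation factor $\xi$ sees $\theta$ as converged. The argument proceeds timescale by timescale, each time writing the recursion as a discretized ODE (or differential inclusion) with a martingale-difference noise term and an asymptotically vanishing bias.

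\textbf{Step 1 (fast timescale, critics).} With $\theta,\xi$ frozen, the $\kappa$-recursion is standard TD(0) for the policy evaluation operator, and the $\eta$-recursion is TD for the operator in \eqref{eq:enhanced_bellman} lifted to $Q$. Lemma~\ref{lem:enhanced_bellman_contraction} gives that this operator is a $\gamma$-contraction in $\|\cdot\|_\infty$. Hence the limiting ODE $\dot Q = \hat B^\pi[Q]-Q$ has the unique fixed point $Q^{\pi_\theta}_{g,h}$ as a globally asymptotically stable equilibrium. Boundedness of the iterates follows from boundedness of $g,h,c$ and the projection. The noise is a martingale difference with bounded second moment on a finite MDP. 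By Borkar's two-timescale theorem, $Q_{g,h}(\cdot;\eta_l)\to Q^{\pi_{\theta_l}}_{g,h}$ and $Q_c(\cdot;\kappa_l)\to Q^{\pi_{\theta_l}}_c$ almost surely. Continuity in $\theta$ (A2) makes the tracking uniform.

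\textbf{Step 2 (middle timescale, actor).} With critics replaced by their limits and $\xi$ frozen, the update is the projected iteration $\theta\leftarrow\Gamma_\Theta(\theta-\zeta_2 g_\theta)$. Here $g_\theta$ is built from the gradient components in \eqref{eq:grad_components}, using Theorem~\ref{thm:policy_gradient} for $\nabla_\theta V_{g,h}$ on the absorbing MDP. Two features prevent a plain ODE. First, the mask $1_{\mathcal X_p^{\pi_{\theta_l}}}$ and the max/min structure make $g_\theta$ discontinuous in $\theta$. Second, the symmetric projection switches on the sign of $\langle g^{in}_r,g^{in}_c\rangle$. We would therefore replace the mean field by its Filippov set-valued closure $G(\theta)$. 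This map is upper semicontinuous with nonempty compact convex values, and it is bounded because $\phi$ is clamped at $10^{-6}$ and $\Theta$ is compact. We then invoke the Benaïm–Hofbauer–Sorin theory of stochastic approximation for differential inclusions, which gives convergence of $\theta_l$ to an internally chain-transitive set of $\dot\theta\in \hat\Gamma_\Theta(-G(\theta))$. To identify this set with generalized stationary points, we would exhibit the surrogate objective $L_\xi(\theta)$ of \eqref{eq:full_optimization_revised} as a Lyapunov function. In the unprojected case $g_\theta$ is the Clarke gradient of $L_\xi$. In the projected case, $\langle \tilde g^{in}_r+\tilde g^{in}_c,\, g^{in}_r+g^{in}_c\rangle\ge0$, since each projected vector is orthogonal to the other raw vector and the residual cross-terms are nonnegative. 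So $-g_\theta$ is still a (weak) descent direction, and $L_\xi$ decreases along trajectories except at points where $0\in G(\theta)$ plus the normal cone of $\Theta$. A3 (strict feasibility) supplies a Slater-type condition. It guarantees that the constraint set in \eqref{eq:full_optimization_revised} has nonempty interior, so these stationary points are KKT-type local optima rather than degenerate boundary artifacts.

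\textbf{Step 3 (slow timescale, $\xi$) and main obstacle.} With $\theta$ at its limit $\theta^*(\xi)$, the $\xi$-update is least-squares regression onto $y_t=\gamma^{T-t}$ over successful rollouts. Its ODE is a gradient flow of a convex (in the tabular case) objective whose minimizer is $\phi^{\pi_{\theta^*}}_\gamma$ of Definition~\ref{def:compensation factor}, so $\xi$ converges. Combining the three timescales then yields convergence of $(\theta_l)$ to the stationary set of the surrogate \eqref{eq:full_optimization_revised} evaluated at the limiting $\phi$ and partition. We expect Step 2 to be the hard step, specifically showing that the switching direction produced by symmetric projection is a descent direction for a single Lyapunov function. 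There are two complications. The Clarke-gradient identification breaks at mask boundaries, and the rectified field is not a gradient. Our first attempt would prove the inner-product inequality above pointwise and extend it to the Filippov closure by upper semicontinuity. If that fails, we would instead characterize the limit points only as zeros of $G$, i.e.\ ``generalized stationary points,'' which is the form promised in Section~\ref{subssec:ca}.
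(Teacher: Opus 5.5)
Your outline matches the paper's sketch: fast critics justified by the $\gamma$-contraction of Lemma~\ref{lem:enhanced_bellman_contraction}, then the actor treated as projected SGD with a switched direction. The paper, however, never proves the actor step; it asserts that the update has the structure of the scheme in \cite{gu2024balance} and imports that result. Your direct proof of this step breaks at the key inequality.

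The identity $\langle \tilde g_r^{in}+\tilde g_c^{in},\,g_r^{in}+g_c^{in}\rangle=\|g_r^{in}\|^2+\|g_c^{in}\|^2-\langle g_r^{in},g_c^{in}\rangle^2(\|g_r^{in}\|^{-2}+\|g_c^{in}\|^{-2})\ge 0$ is correct. But the direction actually used is $g_\theta=g_r^{out}+g_{\mathrm{mix}}$, and the cross terms between $g_r^{out}$ and $g_{\mathrm{mix}}-(g_r^{in}+g_c^{in})$ have no sign. For instance, take $g_r^{in}=(1,0)$, $g_c^{in}=(-1,1)$, $g_r^{out}=(0,-\tfrac{5}{4})$. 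Then $\tilde g_r^{in}=(\tfrac{1}{2},\tfrac{1}{2})$, $\tilde g_c^{in}=(0,1)$ and $g_\theta=(\tfrac{1}{2},\tfrac{1}{4})$. Hence $\langle g_\theta,\,g_r^{out}+g_r^{in}+g_c^{in}\rangle=-\tfrac{1}{16}$ and $\langle g_\theta,\,g_r^{out}+g_c^{in}\rangle=-\tfrac{9}{16}$. So the flow $\dot\theta=-g_\theta$ locally \emph{increases} both the unit-multiplier penalty and the objective of \eqref{eq:full_optimization_revised}.

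Even without projection, $g_r^{out}+g_r^{in}+g_c^{in}$ is not the gradient of that objective, which is $g_r^{out}+g_c^{in}$. The extra $g_r^{in}$ is the constraint term with its multiplier frozen at one. Moreover, the mask $1_{\mathcal X_p^{\pi_\theta}}$ moves with $\theta$. Any such candidate $L(\theta)$ therefore jumps whenever a state crosses the boundary of the surrogate-feasible set. It cannot serve as the continuous Lyapunov function that the Bena\"{\i}m--Hofbauer--Sorin criterion requires, whatever happens pointwise.

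Your fallback does not avoid this. Stochastic approximation for differential inclusions only places the limit set inside an internally chain-transitive set. Showing that such sets consist of zeros of $G$ (plus the normal cone of $\Theta$) is exactly what requires a Lyapunov function for that zero set; a switched field without one can have cycles. For the same reason, A3 cannot turn these points into ``KKT-type local optima''. No multiplier is adapted, so neither feasibility nor complementary slackness is enforced at a limit, and stationarity is not local optimality in any case.

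Step~3 has a parallel gap; the paper's sketch simply omits $\xi$. The three-timescale theorem needs, for each frozen $\xi$, a unique globally asymptotically stable actor equilibrium $\theta^*(\xi)$ that depends Lipschitz-continuously on $\xi$. Nonconvex switched actor dynamics do not supply this. Also, because the regression target $\phi_\gamma^{\pi_{\theta^*(\xi)}}$ moves with $\xi$, the slow ODE is not the gradient flow of a fixed convex function.

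A smaller point in Step~1: the sampled target applies $\max\{h,\min\{g,\cdot\}\}$ to a single next-state sample. Its conditional mean is therefore $\mathbb{E}[\max\{h,\min\{g,\gamma Q(x',a')\}\}]$ rather than $\max\{h,\min\{g,\gamma\mathbb{E}[Q(x',a')]\}\}$. In stochastic environments the critic still converges, since that operator is also a $\gamma$-contraction, but to its own fixed point rather than to $Q_{g,h}^{\pi_\theta}$ of \eqref{eq:enhanced_bellman}.

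To close Step~2 you would need one of two changes. Either change the rectification, e.g.\ project the whole reach gradient $g_r^{out}+g_r^{in}$ against $g_c^{in}$, for which your inequality does give descent of the fixed-mask penalty. Or weaken the conclusion to convergence to an internally chain-transitive set.
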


\begin{proof}
The proof follows the standard multi-timescale stochastic approximation
argument for actor--critic algorithms and closely parallels the analysis
in~\cite{ganai2023iterative,gu2024balance}.
We provide a brief proof sketch.

\begin{itemize}
\item \emph{Critic convergence.}
Under Assumptions~1-3, the RAPC critic and the cost critic are updated
on a faster timescale than the policy parameters.
For a fixed policy $\pi_\theta$, the corresponding Bellman-type operators
are $\gamma$-contractions.
Therefore, by standard stochastic approximation results,
the critic parameters converge almost surely to their respective fixed
points.

\item \emph{Policy convergence.}
Due to the timescale separation, the critics can be treated as
quasi-stationary when analyzing the policy update.
The resulting policy recursion is a projected stochastic gradient
descent with a gated (soft-switched) update direction.
This update has the same structure as the policy optimization scheme
in~\cite{gu2024balance}, for which it is shown that the policy iterates
converge almost surely to a stationary point that is locally optimal.
Hence, the policy parameters $\theta_k$ converge almost surely to a
locally optimal policy.
\end{itemize}

This completes the proof.
\end{proof}

%% file: appendix/Appendix_C.tex
\section{On-Policy Details}
\label{Appendix C}
\subsection{GAE estimator Definition}
\label{Appendix C.1}
Note, however, that the definition of return \eqref{gae:equation} is \textit{different} from the original definition and hence will result in a different equation for the GAE.
The Bellman equation:
\begin{equation}
V_{g,h}^{\pi}(x_t) = \mathbb{E}_{a_t\sim\pi, x_{t+1}\sim P} \left[ \max\{h(x_t), \min\{g(x_t), \gamma V_{g,h}^{\pi}(x_{t+1})\} \} \right]
\end{equation}
Define the $k$-step target $\Phi^{(k)}$ for the trajectory segment $x_t, \dots, x_{t+k}$ using the operator $T(x, V_{\text{next}}) = \max\{h(x), \min\{g(x), \gamma V_{\text{next}}\}\})$. The target is built recursively, bootstrapping from the value function $V_{g,h}^{\pi}$ evaluated at the final state $x_{t+k}$:
\begin{equation} \label{eq:phi_recursive_clarified}
\Phi^{(k)}(x_t, \dots, x_{t+k}) = T(x_t, \Phi^{(k-1)}(x_{t+1}, \dots, x_{t+k})) \quad \text{for } k \ge 1
\end{equation}
where the recursion implicitly uses $\Phi^{(0)}(x_{t+k}) = V_{g,h}^{\pi}(x_{t+k})$ as the base case when unwound. For instance:
\[ \Phi^{(1)}(x_t, x_{t+1}) = T(x_t, V_{g,h}^{\pi}(x_{t+1})) \]
\[ \Phi^{(2)}(x_t, x_{t+1}, x_{t+2}) = T(x_t, T(x_{t+1}, V_{g,h}^{\pi}(x_{t+2}))) \]
The $k$-step advantage estimator $\hat{A}_{g,h}^{\pi(k)}$ depends on the trajectory segment:
\begin{equation} \label{eq:k_step_advantage_clarified}
\hat{A}_{g,h}^{\pi(k)}(x_t, \dots, x_{t+k}) = \Phi^{(k)}(x_t, \dots, x_{t+k}) - V_{g,h}^{\pi}(x_t)
\end{equation}
The GAE estimator $\hat{A}_{g,h}^{\pi(\text{GAE})}$ is the $\lambda$-weighted sum:
\begin{equation} \label{eq:gae_final_clarified}
\hat{A}_{g,h}^{\pi(\text{GAE})}(x_t) = \frac{1}{1-\lambda} \sum_{k=1}^{\infty} \lambda^k \hat{A}_{g,h}^{\pi(k)}(x_t, \dots, x_{t+k})
\end{equation}
Note: The calculation of $\hat{A}_{g,h}^{\pi(k)}$ inside the sum requires evaluating $V_{g,h}^{\pi}$ and the functions $h, g$ along the trajectory segment $x_t, \dots, x_{t+k}$.

\subsection{On-Policy RAPCPO Deployment Details}
\label{Appendix C.2}

We denote the policy optimization step when the parameters are $\theta = \theta_{l}$ as follows:
\begin{equation}
\begin{aligned}
    \min_{\pi}\;&
    \mathbb{E}_{x,a \sim \pi_{\theta_l}}
    \Big[
        \overline{A_c^{\pi_{\theta_l}}}(x,a)\,
        1_{x \in \mathcal{X}_p^{\pi_{\theta_l}}}
        + \frac{\overline{A_{g,h}^{\pi_{\theta_l}}}(x,a)}{\phi_\gamma^{\pi_{\theta_l}}(x)}\,
        1_{x \notin \mathcal{X}_p^{\pi_{\theta_l}}}
    \Big] \\
    \text{s.t.}\quad &
\forall x \in \mathcal{X}_p^{\pi_{\theta_{l}}},\quad
V_{g,h}^\pi(x) \le -M \cdot \phi_{\gamma}^{\pi_{\theta_{l}}}(x)\cdot p.
\end{aligned}
\label{optimization_problem}
\end{equation}

\begin{equation}
\label{eq:Xp_pi_l}
\mathcal{X}_p^{\pi_{\theta_l}}
=
\left\{
x \in \mathcal{X}\ \middle|\ 
V_{g,h}^{\pi_{\theta_l}}(x)
\le
-\,pM \phi_\gamma^{\pi_{\theta_l}}(x),\;
\phi_\gamma^{\pi_{\theta_l}}(x) \ge 0
\right\},
\end{equation}
\begin{align}
\overline{A_c^{\pi_{\theta_l}}}(x, a) &= \max \left( 
    \frac{\pi_\theta(a \mid x)}{\pi_{\theta_l}(a \mid x)} \hat{A}_{c}^{\pi_{\theta_l} (\text{GAE})}(x, a),\ 
    \text{CLIP}\left( \frac{\pi_\theta(a \mid x)}{\pi_{\theta_l}(a \mid x)},1-\epsilon,1+\epsilon \right)\hat{A}_{c}^{\pi_{\theta_l} (\text{GAE})}(x, a) 
\right) \\
\overline{A_{g,h}^{\pi_{\theta_l}}}(x, a) &= \max \left( 
    \frac{\pi_\theta(a \mid x)}{\pi_{\theta_l}(a \mid x)} \hat{A}_{g,h}^{\pi_{\theta_l} (\text{GAE})}(x, a),\ 
    \text{CLIP}\left( \frac{\pi_\theta(a \mid x)}{\pi_{\theta_l}(a \mid x)},1-\epsilon,1+\epsilon \right)\hat{A}_{g,h}^{\pi_{\theta_l} (\text{GAE})}(x, a) 
\right)
\label{gae:equation}
\end{align}

Let $r_t(\theta)=\pi_\theta(a_t\mid x_t)/\pi_{\theta_l}(a_t\mid x_t)$ denote the
usual PPO probability ratio. Since we formulate policy optimization as minimizing
a loss, we use the PPO clipped \emph{loss} form. Define the reach-avoid and cost
policy losses as
\begin{equation}
\ell_R(t;\theta)
=
\max\!\Big(
r_t(\theta)\,\overline{A_{g,h}^{\pi_{\theta}}},\;
\operatorname{clip}(r_t(\theta),1-\epsilon,1+\epsilon)\,
\overline{A_{g,h}^{\pi_{\theta}}}
\Big),
\label{eq:ppo_loss_R}
\end{equation}
\begin{equation}
\ell_C(t;\theta)
=
\max\!\Big(
r_t(\theta)\,\overline{A_{g,h}^{\pi_{\theta}}},\;
\operatorname{clip}(r_t(\theta),1-\epsilon,1+\epsilon)\,
\overline{A_{g,h}^{\pi_{\theta}}}
\Big),
\label{eq:ppo_loss_C}
\end{equation}

At each policy update iteration, we partition an on-policy minibatch
$\mathcal{B}$ into two disjoint subsets according to the surrogate-feasible
indicator $m(x_t)$:
\begin{align}
\mathcal{B}_1 = \{(x_t,a_t)\in\mathcal{B}:\; m(x_t)=1\},\quad
\mathcal{B}_0 = \{(x_t,a_t)\in\mathcal{B}:\; m(x_t)=0\},
\end{align}
which form a partition of $\mathcal{B}$.
Samples in $\mathcal{B}_0$ are treated as surrogate-infeasible and are used
exclusively to improve the reach-avoid surrogate objective, while samples in
$\mathcal{B}_1$ are surrogate-feasible and are used to jointly optimize
reach-avoid performance and cost.

Using the samples in $\mathcal{B}_1$, we compute the reach-avoid and cost policy
gradients
\begin{align}
g_R^{(1)} = \nabla_\theta \,
\mathbb{E}_{(x_t,a_t)\in\mathcal{B}_1}
\big[
\ell_R(t;\theta)
\big]\Big|_{\theta=\theta_l}, \quad
g_C^{(1)} = \nabla_\theta \,
\mathbb{E}_{(x_t,a_t)\in\mathcal{B}_1}
\big[
\ell_C(t;\theta)
\big]\Big|_{\theta=\theta_l}.
\end{align}
When the two gradients are negatively aligned,
$\langle g_R^{(1)}, g_C^{(1)} \rangle < 0$, we remove conflicting components via
a symmetric projection step, inspired by projection-based gradient conflict
resolution methods in multi-objective optimization:
\begin{equation}
\tilde g_R^{(1)}
=
g_R^{(1)} - \frac{\langle g_R^{(1)}, g_C^{(1)} \rangle}
{\langle g_C^{(1)}, g_C^{(1)} \rangle + \delta} g_C^{(1)},
\end{equation}
\begin{equation}
\tilde g_C^{(1)}
=
g_C^{(1)} - \frac{\langle g_C^{(1)}, g_R^{(1)} \rangle}
{\langle g_R^{(1)}, g_R^{(1)} \rangle + \delta} g_R^{(1)},
\end{equation}
where $\delta>0$ is a small numerical stabilization constant.
The mixed feasible-set update direction is then defined as
\[
g_{\mathrm{mix}} =
\begin{cases}
\tilde g_R^{(1)} + \tilde g_C^{(1)}, & \langle g_R^{(1)}, g_C^{(1)} \rangle < 0,\\
g_R^{(1)} + g_C^{(1)}, & \text{otherwise}.
\end{cases}
\]

Using the surrogate-infeasible samples in $\mathcal{B}_0$, we compute the
reach-avoid gradient
\[
g_R^{(0)} = \nabla_\theta \,
\mathbb{E}_{(x_t,a_t)\in\mathcal{B}_0}
\big[
\ell_R(t;\theta)
\big]\Big|_{\theta=\theta_l}.
\]

The final policy update direction is obtained by combining the two components:
\[
g = g_R^{(0)} + g_{\mathrm{mix}},
\]
and the actor parameters are updated via
\[
\theta_{l+1} = \theta_l - \eta \, g.
\]

%% file: appendix/Appendix_D.tex
\section{Experiment Details}
\label{Appendix D}

In this section, we provide additional experimental details, including the hardware setup, environment configurations, baseline implementations, and hyperparameter settings.

All experiments were conducted on a system equipped with a NVIDIA RTX4090 GPU. The training process for each environment took a maximum of 1 hours to complete.

\subsection{Environment Settings}
\label{Appendix D.1}

Both the experimental setup of our main experiments and the reward shaping design for the baselines follow \cite{so2024solving}, with a notable distinction in the definition of the function \(g\).
Specifically, while \cite{so2024solving} employs state augmentation, defining $g$ based on an augmented state $\tilde{x}$, the function $g(\tilde{x})$ in their work effectively depends only on the original state component $x$. Recognizing this, our method simplifies the formulation by defining $g$ directly on the original state $x$, thereby avoiding the need for state augmentation. 

Apart from this change, the remaining experimental configurations for the Safety HalfCheetah, Safety Hopper, Pendulum, and FixedWing environments are kept consistent with those in \cite{so2024solving}, to ensure a fair comparison. For the PointGoal environment, the cost is defined as $c(x_t, a_t, x_{t+1}) = 3 \cdot\Vert a_t \Vert^2$. All six environments are implemented in Jax \cite{bradbury2018jax} to enable improved computational efficiency and parallelization.

\subsection{Supplementary Experiments}
\label{Appendix D.2}
Here, we compare the true reach-avoid probability with estimates obtained using the fixed-$\gamma$ Bellman equation in \eqref{eq:value_function_conditions}, with and without the compensation factor in Figure~\ref{fig:frozenlake}.
\begin{figure*}[t]
  \centering
  \begin{subfigure}[t]{0.24\textwidth}
    \centering
    \includegraphics[width=\linewidth]{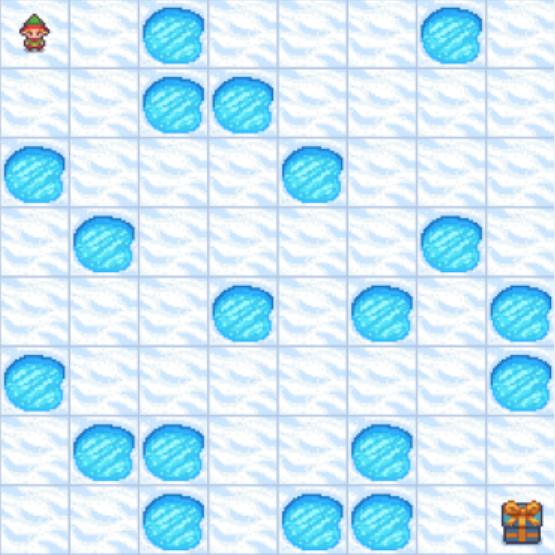}
    \caption{FrozenLake}
    \label{fig:fl_init}
  \end{subfigure}\hfill
  \begin{subfigure}[t]{0.24\textwidth}
    \centering
    \includegraphics[width=\linewidth]{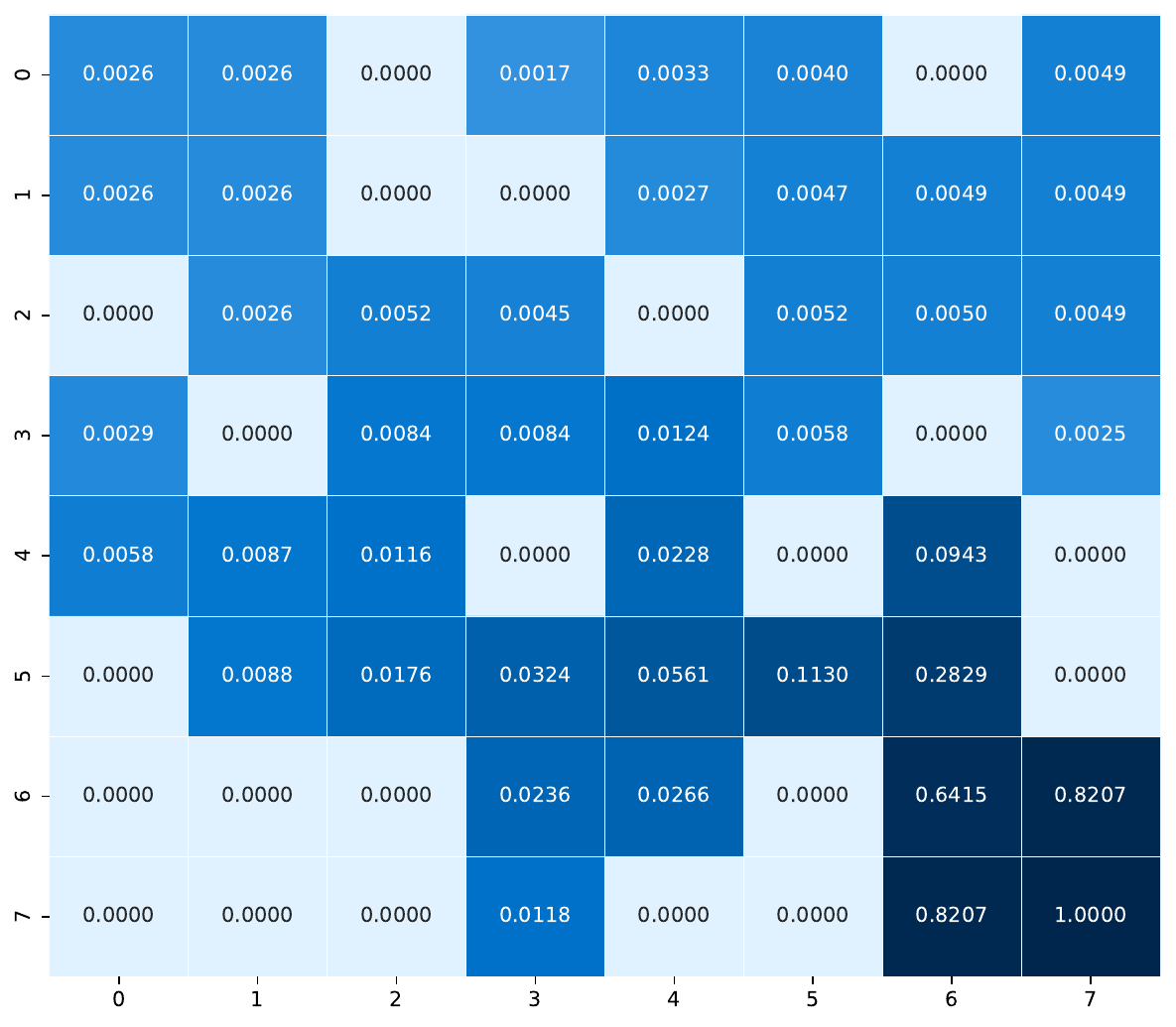}
    \caption{True reach--avoid probability}
    \label{fig:fl_true}
  \end{subfigure}\hfill
  \begin{subfigure}[t]{0.24\textwidth}
    \centering
    \includegraphics[width=\linewidth]{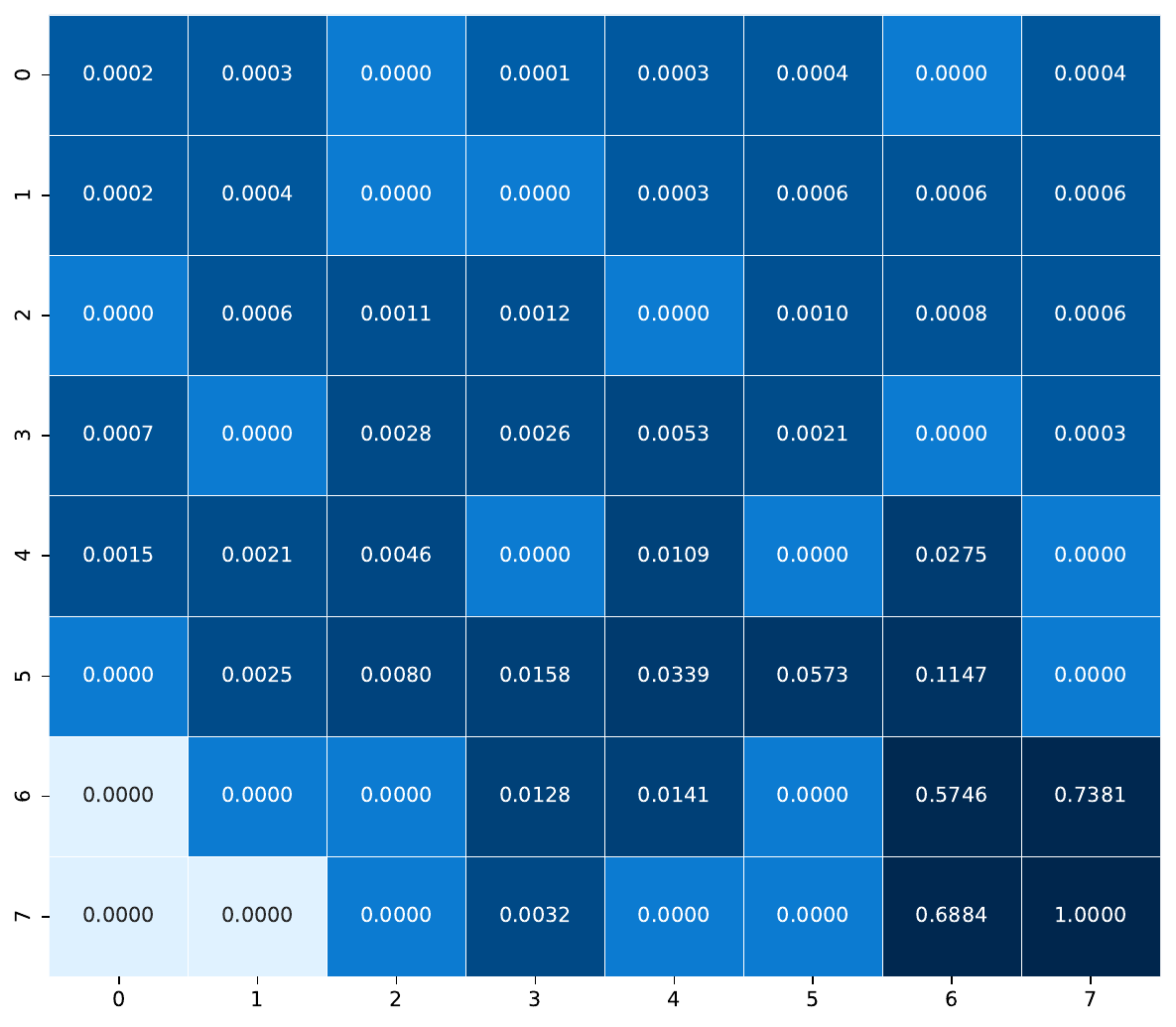}
    \caption{Without $\rho^{\pi}(x)$}
    \label{fig:fl_wo}
  \end{subfigure}\hfill
  \begin{subfigure}[t]{0.24\textwidth}
    \centering
    \includegraphics[width=\linewidth]{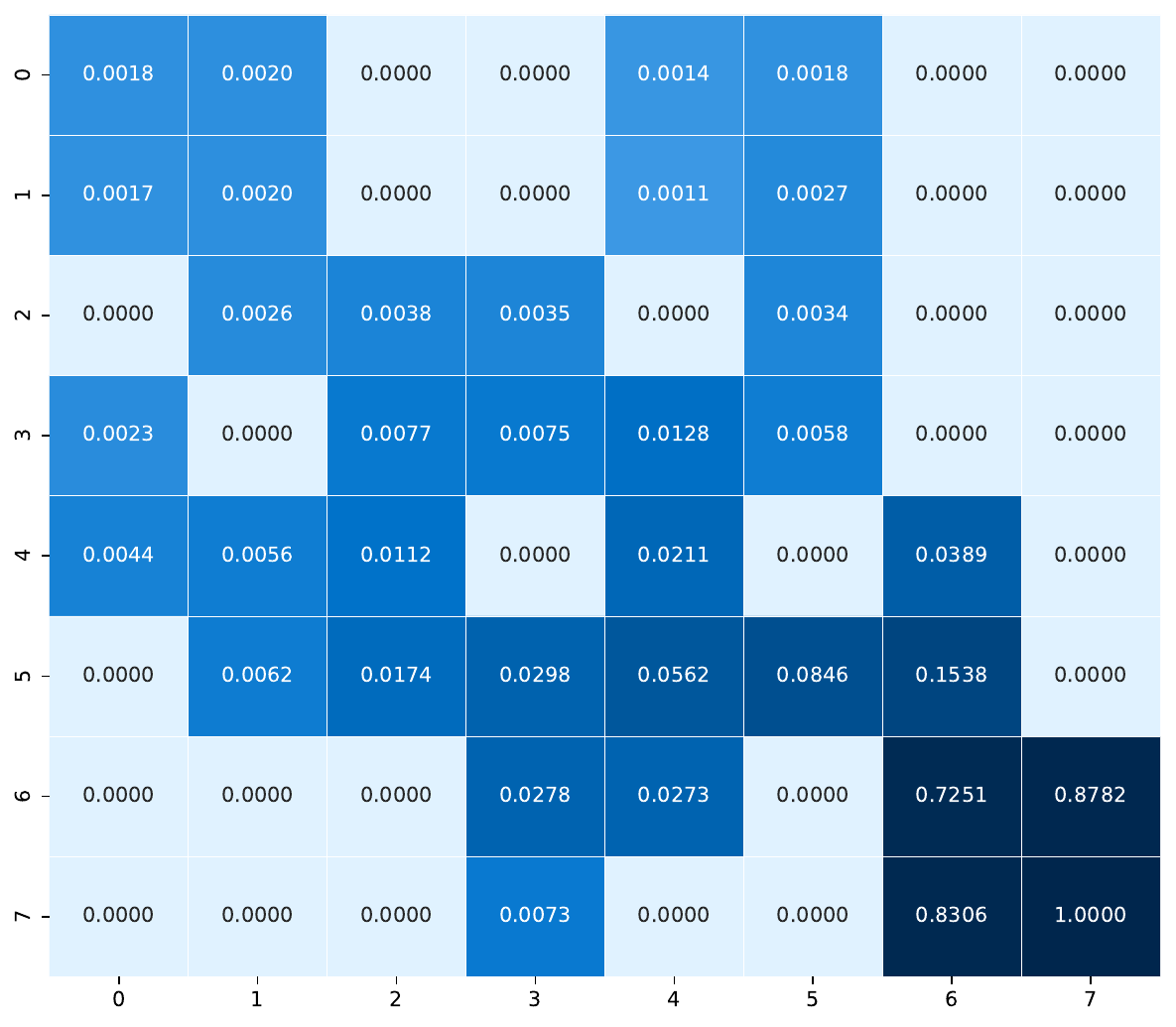}
    \caption{With $\rho^{\pi}(x)$}
    \label{fig:fl_w}
  \end{subfigure}
  \caption{Effect of the auxiliary compensation factor on reach--avoid probability estimation.
  The compensation factor significantly reduces conservativeness.}
  \label{fig:frozenlake}
\end{figure*}

To further clarify the behavior of the learned cost critic, we conducted additional experiments and report a fine-grained decomposition of average costs over safe, unsafe, and all trajectories. This analysis is intended to show that the cost critic preserves a complete view of the cost landscape, rather than only reflecting trajectories that eventually satisfy the safety constraint. For successful safe trajectories, the cost is accumulated until the completion time $T$, where typically $T < H$. For unsuccessful unsafe trajectories, the cost is accumulated over the full horizon $H$. The resulting average costs are reported in Table~\ref{tab:appendix-cost-breakdown}.

\begin{table*}[t]
\centering
\caption{Average cost decomposition over safe, unsafe, and all trajectories. Lower values indicate lower accumulated cost.}
\label{tab:appendix-cost-breakdown}
\scriptsize
\setlength{\tabcolsep}{5pt}
\renewcommand{\arraystretch}{1.08}
\begin{tabular}{llccc}
\toprule
Task & Method & Safe & Unsafe & All \\
\midrule
\multirow{6}{*}{PointGoal}
& RC-PP   & 100.77 & 131.77 & 108.34 \\
& RESPO   & 124.51 & 165.07 & 132.57 \\
& PPO$_\beta$ & 224.44 & 265.82 & 236.19 \\
& CPPO    & 82.77  & 109.25 & 97.42  \\
& Saute   & 60.97  & 81.77  & 73.58  \\
& \textbf{Ours} & \textbf{58.23} & \textbf{74.07} & \textbf{61.53} \\
\midrule
\multirow{6}{*}{FixedWing}
& RC-PP   & 54.09  & 74.09  & 56.72  \\
& RESPO   & 75.83  & 100.25 & 77.83  \\
& PPO$_\beta$ & 192.16 & 251.35 & 202.64 \\
& CPPO    & 43.27  & 57.93  & 51.36  \\
& Saute   & 36.19  & 48.84  & 44.24  \\
& \textbf{Ours} & \textbf{35.52} & \textbf{46.86} & \textbf{36.82} \\
\midrule
\multirow{6}{*}{SafetyHopper}
& RC-PP   & 63.82  & 80.29  & 65.18  \\
& RESPO   & 72.27  & 96.04  & 73.41  \\
& PPO$_\beta$ & 102.32 & 131.25 & 106.53 \\
& CPPO    & 59.03  & 79.32  & 68.79  \\
& Saute   & 47.46  & 63.94  & 56.92  \\
& \textbf{Ours} & \textbf{42.13} & \textbf{53.29} & \textbf{43.17} \\
\midrule
\multirow{6}{*}{SafetyHalfCheetah}
& RC-PP   & 66.37  & 88.43  & 71.45  \\
& RESPO   & 78.33  & 102.73 & 82.26  \\
& PPO$_\beta$ & 111.34 & 152.66 & 121.87 \\
& CPPO    & 46.89  & 63.23  & 57.17  \\
& Saute   & 41.96  & 56.47  & 51.36  \\
& \textbf{Ours} & \textbf{40.33} & \textbf{50.72} & \textbf{42.65} \\
\bottomrule
\end{tabular}
\end{table*}

As shown in Table~\ref{tab:appendix-cost-breakdown}, our method achieves the lowest average cost consistently across safe, unsafe, and all trajectories. We note that some baselines, such as Saute and CPPO, can also obtain relatively low costs. However, these results should be interpreted together with their reach rates reported in the main paper. In particular, these methods often reduce cost by failing to complete the task and remaining in safe but unproductive regions. In contrast, our method maintains low accumulated cost while achieving a higher reach rate, indicating that it learns to complete the task both safely and efficiently.

\subsection{Implementation of The Baselines}
\label{Appendix D.3}

The implementation of the baseline follows their original implementations:
\begin{itemize}
\item \textbf{RC-PPO}: \url{https://oswinso.xyz/rcppo/} (No license)
\item \textbf{RESPO}: \url{https://github.com/milanganai/milanganai.github.io/tree/main/NeurIPS2023/code} (No license)
\item \textbf{CPPO}: \url{https://github.com/yingchengyang/CPPO} (No license)
\item \textbf{Saut\'e}: \url{https://github.com/huawei-noah/HEBO/tree/master/SIMMER} (No license) 
\end{itemize}

\subsection{Hyperparameters}
\label{Appendix D.4}
In experiments, the hyperparameters used by RAPC-PPO and other on-policy baselines are shown in Table \ref{table:hyper_on}.

\begin{table}[htbp]
\centering
\caption{Hyperparameter Settings for On-policy Algorithms}
\label{table:hyper_on} 
\begin{tabular}{ll}
    \toprule
    Hyperparameters for On-policy Algorithms                                &  Values \\ 
    \midrule
    \textbf{On-policy parameters} \\
    Network Architecture                                                    & MLP  \\
    Units per Hidden Layer                                                  & 256  \\
    Numbers of Hidden Layers                                                & 2  \\
    Hidden Layer Activation Function                                        & silu  \\
    Entropy coefficient                                                     & Linear Decay 1e-2 $\rightarrow$ 0 \\
    Optimizer                                                               & Adam  \\
    Discount factor $\gamma$                                                & 0.99 \\
    GAE lambda parameter                                                    & 0.95  \\
    Clip Ratio                                                              & 0.2  \\
    Actor Learning rate                                                     & Linear Decay 3e-4 $\rightarrow$ 0 \\
    Reward/Cost Critic Learning rate                                        & Linear Decay 3e-4 $\rightarrow$ 0 \\
    \midrule
    \textbf{RAPC-PPO specific parameters} \\
    Discount factor $\gamma$                                                & 0.999 \\    Compensation factor Output Layer Activation Function                    & sigmoid  \\
    Compensation factor Learning Rate                                       & Linear Decay 1e-4 $\rightarrow$ 0 \\
    \midrule
    \textbf{RESPO specific parameters} \\
    REF Output Layer Activation Function                                    & sigmoid  \\
    Lagrangian multiplier Output Layer Activation function                  & softplus  \\
    Lagrangian multiplier Learning rate                                     & Linear Decay 5e-5 $\rightarrow$ 0 \\
    REF Learning Rate                                                       & Linear Decay 1e-4 $\rightarrow$ 0 \\
    \bottomrule
\end{tabular} 
\end{table}

For static weighted multiplier $\beta$, we set $\beta=0.1$. Moreover, the hard safety constraint $C_{{fail}}$ in \eqref{eq:threshold} is set to $20$ in every environment.

Regarding the difference in the discount factor $\gamma$ between RC-PPO and the other baselines, we note that the baselines were also evaluated with $\gamma = 0.999$; however, their performance was significantly worse than with $\gamma = 0.99$.

%% file: main.bbl
\begin{thebibliography}{53}
\providecommand{\natexlab}[1]{#1}
\providecommand{\url}[1]{\texttt{#1}}
\expandafter\ifx\csname urlstyle\endcsname\relax
  \providecommand{\doi}[1]{doi: #1}\else
  \providecommand{\doi}{doi: \begingroup \urlstyle{rm}\Url}\fi

\bibitem[Achiam et~al.(2017)Achiam, Held, Tamar, and Abbeel]{achiam2017constrained}
Achiam, J., Held, D., Tamar, A., and Abbeel, P.
\newblock Constrained policy optimization.
\newblock In \emph{International conference on machine learning}, pp.\  22--31. PMLR, 2017.

\bibitem[Altman(2021)]{altman2021constrained}
Altman, E.
\newblock \emph{Constrained Markov decision processes}.
\newblock Routledge, 2021.

\bibitem[Ames et~al.(2019)Ames, Coogan, Egerstedt, Notomista, Sreenath, and Tabuada]{ames2019control}
Ames, A.~D., Coogan, S., Egerstedt, M., Notomista, G., Sreenath, K., and Tabuada, P.
\newblock Control barrier functions: Theory and applications.
\newblock In \emph{2019 18th European control conference (ECC)}, pp.\  3420--3431. Ieee, 2019.

\bibitem[Andrychowicz et~al.(2017)Andrychowicz, Wolski, Ray, Schneider, Fong, Welinder, McGrew, Tobin, Pieter, and Zaremba]{andrychowicz2017hindsight}
Andrychowicz, M., Wolski, F., Ray, A., Schneider, J., Fong, R., Welinder, P., McGrew, B., Tobin, J., Pieter, A., and Zaremba, W.
\newblock Hindsight experience replay.
\newblock \emph{Advances in neural information processing systems}, 30, 2017.

\bibitem[Andrychowicz et~al.(2020)Andrychowicz, Baker, Chociej, Jozefowicz, McGrew, Pachocki, Petron, Plappert, Powell, Ray, et~al.]{andrychowicz2020learning}
Andrychowicz, O.~M., Baker, B., Chociej, M., Jozefowicz, R., McGrew, B., Pachocki, J., Petron, A., Plappert, M., Powell, G., Ray, A., et~al.
\newblock Learning dexterous in-hand manipulation.
\newblock \emph{The International Journal of Robotics Research}, 39\penalty0 (1):\penalty0 3--20, 2020.

\bibitem[B{\"a}uerle \& Ja{\'s}kiewicz(2024)B{\"a}uerle and Ja{\'s}kiewicz]{bauerle2024markov}
B{\"a}uerle, N. and Ja{\'s}kiewicz, A.
\newblock Markov decision processes with risk-sensitive criteria: an overview.
\newblock \emph{Mathematical Methods of Operations Research}, 99\penalty0 (1):\penalty0 141--178, 2024.

\bibitem[Berkenkamp et~al.(2017)Berkenkamp, Turchetta, Schoellig, and Krause]{berkenkamp2017safe}
Berkenkamp, F., Turchetta, M., Schoellig, A., and Krause, A.
\newblock Safe model-based reinforcement learning with stability guarantees.
\newblock \emph{Advances in neural information processing systems}, 30, 2017.

\bibitem[Bojun(2020)]{bojun2020steady}
Bojun, H.
\newblock Steady state analysis of episodic reinforcement learning.
\newblock \emph{Advances in Neural Information Processing Systems}, 33:\penalty0 9335--9345, 2020.

\bibitem[Bradbury et~al.(2018)Bradbury, Frostig, Hawkins, Johnson, Leary, Maclaurin, Necula, Paszke, VanderPlas, Wanderman-Milne, et~al.]{bradbury2018jax}
Bradbury, J., Frostig, R., Hawkins, P., Johnson, M.~J., Leary, C., Maclaurin, D., Necula, G., Paszke, A., VanderPlas, J., Wanderman-Milne, S., et~al.
\newblock {JAX}: composable transformations of {Python}+{NumPy} programs.
\newblock \url{http://github.com/google/jax}, 2018.
\newblock Accessed: 2026-05-12.

\bibitem[Brockman et~al.(2016)Brockman, Cheung, Pettersson, Schneider, Schulman, Tang, and Zaremba]{brockman2016openaigym}
Brockman, G., Cheung, V., Pettersson, L., Schneider, J., Schulman, J., Tang, J., and Zaremba, W.
\newblock Openai gym, 2016.
\newblock URL \url{https://arxiv.org/abs/1606.01540}.

\bibitem[Brunke et~al.(2022)Brunke, Greeff, Hall, Yuan, Zhou, Panerati, and Schoellig]{brunke2022safe}
Brunke, L., Greeff, M., Hall, A.~W., Yuan, Z., Zhou, S., Panerati, J., and Schoellig, A.~P.
\newblock Safe learning in robotics: From learning-based control to safe reinforcement learning.
\newblock \emph{Annual Review of Control, Robotics, and Autonomous Systems}, 5\penalty0 (1):\penalty0 411--444, 2022.

\bibitem[Cao et~al.(2025)Cao, Wang, Ong, {\v{Z}}ikeli{\'c}, Wagner, and Xue]{cao2025comparative}
Cao, Z., Wang, P., Ong, L., {\v{Z}}ikeli{\'c}, {\DJ}., Wagner, D., and Xue, B.
\newblock Comparative analysis of barrier-like function methods for reach-avoid verification in stochastic discrete-time systems.
\newblock \emph{arXiv preprint arXiv:2512.05348}, 2025.

\bibitem[Chen et~al.(2019)Chen, Liu, Kreiss, and Alahi]{chen2019crowd}
Chen, C., Liu, Y., Kreiss, S., and Alahi, A.
\newblock Crowd-robot interaction: Crowd-aware robot navigation with attention-based deep reinforcement learning.
\newblock In \emph{2019 international conference on robotics and automation (ICRA)}, pp.\  6015--6022. IEEE, 2019.

\bibitem[Cheng et~al.(2019)Cheng, Orosz, Murray, and Burdick]{cheng2019end}
Cheng, R., Orosz, G., Murray, R.~M., and Burdick, J.~W.
\newblock End-to-end safe reinforcement learning through barrier functions for safety-critical continuous control tasks.
\newblock In \emph{Proceedings of the AAAI conference on artificial intelligence}, volume~33, pp.\  3387--3395, 2019.

\bibitem[Chow et~al.(2015)Chow, Tamar, Mannor, and Pavone]{chow2015risk}
Chow, Y., Tamar, A., Mannor, S., and Pavone, M.
\newblock Risk-sensitive and robust decision-making: a cvar optimization approach.
\newblock \emph{Advances in neural information processing systems}, 28, 2015.

\bibitem[Chow et~al.(2018{\natexlab{a}})Chow, Ghavamzadeh, Janson, and Pavone]{chow2018risk}
Chow, Y., Ghavamzadeh, M., Janson, L., and Pavone, M.
\newblock Risk-constrained reinforcement learning with percentile risk criteria.
\newblock \emph{Journal of Machine Learning Research}, 18\penalty0 (167):\penalty0 1--51, 2018{\natexlab{a}}.

\bibitem[Chow et~al.(2018{\natexlab{b}})Chow, Nachum, Duenez-Guzman, and Ghavamzadeh]{chow2018lyapunov}
Chow, Y., Nachum, O., Duenez-Guzman, E., and Ghavamzadeh, M.
\newblock A lyapunov-based approach to safe reinforcement learning.
\newblock \emph{Advances in neural information processing systems}, 31, 2018{\natexlab{b}}.

\bibitem[De~Ryck et~al.(2020)De~Ryck, Versteyhe, and Debrouwere]{de2020automated}
De~Ryck, M., Versteyhe, M., and Debrouwere, F.
\newblock Automated guided vehicle systems, state-of-the-art control algorithms and techniques.
\newblock \emph{Journal of Manufacturing Systems}, 54:\penalty0 152--173, 2020.

\bibitem[De~Vries et~al.(2024)De~Vries, Li, Song, and Sun]{de2024alternating}
De~Vries, W.~A., Li, M., Song, Q., and Sun, Z.
\newblock An alternating peak-optimization method for optimal trajectory generation of quadrotor drones.
\newblock In \emph{2024 European Control Conference (ECC)}, pp.\  3267--3272. IEEE, 2024.

\bibitem[Fisac et~al.(2019)Fisac, Lugovoy, Rubies-Royo, Ghosh, and Tomlin]{fisac2019bridging}
Fisac, J.~F., Lugovoy, N.~F., Rubies-Royo, V., Ghosh, S., and Tomlin, C.~J.
\newblock Bridging hamilton-jacobi safety analysis and reinforcement learning.
\newblock In \emph{2019 International Conference on Robotics and Automation (ICRA)}, pp.\  8550--8556. IEEE, 2019.

\bibitem[Folland(1999)]{folland1999real}
Folland, G.~B.
\newblock \emph{Real analysis: modern techniques and their applications}.
\newblock John Wiley \& Sons, 1999.

\bibitem[Fragapane et~al.(2021)Fragapane, De~Koster, Sgarbossa, and Strandhagen]{fragapane2021planning}
Fragapane, G., De~Koster, R., Sgarbossa, F., and Strandhagen, J.~O.
\newblock Planning and control of autonomous mobile robots for intralogistics: Literature review and research agenda.
\newblock \emph{European journal of operational research}, 294\penalty0 (2):\penalty0 405--426, 2021.

\bibitem[Ganai et~al.(2023)Ganai, Gong, Yu, Herbert, and Gao]{ganai2023iterative}
Ganai, M., Gong, Z., Yu, C., Herbert, S., and Gao, S.
\newblock Iterative reachability estimation for safe reinforcement learning.
\newblock \emph{Advances in Neural Information Processing Systems}, 36:\penalty0 69764--69797, 2023.

\bibitem[Gu et~al.(2024)Gu, Sel, Ding, Wang, Lin, Jin, and Knoll]{gu2024balance}
Gu, S., Sel, B., Ding, Y., Wang, L., Lin, Q., Jin, M., and Knoll, A.
\newblock Balance reward and safety optimization for safe reinforcement learning: A perspective of gradient manipulation.
\newblock In \emph{Proceedings of the AAAI Conference on Artificial Intelligence}, volume~38, pp.\  21099--21106, 2024.

\bibitem[Hau et~al.(2024)Hau, Delage, Derman, Ghavamzadeh, and Petrik]{hau2024q}
Hau, J.~L., Delage, E., Derman, E., Ghavamzadeh, M., and Petrik, M.
\newblock Q-learning for quantile mdps: A decomposition, performance, and convergence analysis.
\newblock \emph{arXiv preprint arXiv:2410.24128}, 2024.

\bibitem[Khalil(2002)]{khalil2002nonlinear}
Khalil, H.~K.
\newblock \emph{Nonlinear Systems}.
\newblock Prentice Hall, Upper Saddle River, NJ, 3 edition, 2002.

\bibitem[Le et~al.(2024)Le, Wagner, Witzman, Rabinovich, and Ong]{le2024reinforcement}
Le, X.-B., Wagner, D., Witzman, L., Rabinovich, A., and Ong, L.
\newblock Reinforcement learning with ltl and $\omega $-regular objectives via optimality-preserving translation to average rewards.
\newblock \emph{Advances in Neural Information Processing Systems}, 37:\penalty0 117109--117132, 2024.

\bibitem[Li et~al.(2022)Li, Zhong, and Brandeau]{li2022quantile}
Li, X., Zhong, H., and Brandeau, M.~L.
\newblock Quantile markov decision processes.
\newblock \emph{Operations research}, 70\penalty0 (3):\penalty0 1428--1447, 2022.

\bibitem[Lin et~al.(2003)Lin, Wu, and Kang]{lin2003optimal}
Lin, Y., Wu, C., and Kang, B.
\newblock Optimal models with maximizing probability of first achieving target value in the preceding stages.
\newblock \emph{Science in China Series A: Mathematics}, 46\penalty0 (3):\penalty0 396--414, 2003.

\bibitem[Liu et~al.(2022)Liu, Zhu, and Zhang]{liu2022goal}
Liu, M., Zhu, M., and Zhang, W.
\newblock Goal-conditioned reinforcement learning: Problems and solutions.
\newblock \emph{arXiv preprint arXiv:2201.08299}, 2022.

\bibitem[Marvi \& Kiumarsi(2021)Marvi and Kiumarsi]{marvi2021safe}
Marvi, Z. and Kiumarsi, B.
\newblock Safe reinforcement learning: A control barrier function optimization approach.
\newblock \emph{International Journal of Robust and Nonlinear Control}, 31\penalty0 (6):\penalty0 1923--1940, 2021.

\bibitem[Plappert et~al.(2018)Plappert, Andrychowicz, Ray, McGrew, Baker, Powell, Schneider, Tobin, Chociej, Welinder, et~al.]{plappert2018multi}
Plappert, M., Andrychowicz, M., Ray, A., McGrew, B., Baker, B., Powell, G., Schneider, J., Tobin, J., Chociej, M., Welinder, P., et~al.
\newblock Multi-goal reinforcement learning: Challenging robotics environments and request for research.
\newblock \emph{arXiv preprint arXiv:1802.09464}, 2018.

\bibitem[Ray et~al.(2019)Ray, Achiam, and Amodei]{ray2019benchmarking}
Ray, A., Achiam, J., and Amodei, D.
\newblock Benchmarking safe exploration in deep reinforcement learning.
\newblock \emph{arXiv preprint arXiv:1910.01708}, 7\penalty0 (1):\penalty0 2, 2019.

\bibitem[Schulman et~al.(2017)Schulman, Wolski, Dhariwal, Radford, and Klimov]{schulman2017proximal}
Schulman, J., Wolski, F., Dhariwal, P., Radford, A., and Klimov, O.
\newblock Proximal policy optimization algorithms.
\newblock \emph{arXiv preprint arXiv:1707.06347}, 2017.

\bibitem[So \& Fan(2023)So and Fan]{so2023solvingstabilizeavoidoptimalcontrol}
So, O. and Fan, C.
\newblock Solving stabilize-avoid optimal control via epigraph form and deep reinforcement learning, 2023.
\newblock URL \url{https://arxiv.org/abs/2305.14154}.

\bibitem[So et~al.(2024)So, Ge, and Fan]{so2024solving}
So, O., Ge, C., and Fan, C.
\newblock Solving minimum-cost reach avoid using reinforcement learning.
\newblock \emph{Advances in Neural Information Processing Systems}, 37:\penalty0 30951--30984, 2024.

\bibitem[Sootla et~al.(2022)Sootla, Cowen-Rivers, Jafferjee, Wang, Mguni, Wang, and Ammar]{sootla2022saute}
Sootla, A., Cowen-Rivers, A.~I., Jafferjee, T., Wang, Z., Mguni, D.~H., Wang, J., and Ammar, H.
\newblock Saut{\'e} rl: Almost surely safe reinforcement learning using state augmentation.
\newblock In \emph{International Conference on Machine Learning}, pp.\  20423--20443. PMLR, 2022.

\bibitem[Sutton et~al.(1998)Sutton, Barto, et~al.]{sutton1998reinforcement}
Sutton, R.~S., Barto, A.~G., et~al.
\newblock \emph{Reinforcement learning: An introduction}, volume~1.
\newblock MIT press Cambridge, 1998.

\bibitem[Svoboda et~al.(2024)Svoboda, Bansal, and Chatterjee]{svoboda2024reinforcement}
Svoboda, J., Bansal, S., and Chatterjee, K.
\newblock Reinforcement learning from reachability specifications: Pac guarantees with expected conditional distance.
\newblock In \emph{Forty-first International Conference on Machine Learning}, 2024.

\bibitem[Tessler et~al.(2018)Tessler, Mankowitz, and Mannor]{tessler2018reward}
Tessler, C., Mankowitz, D.~J., and Mannor, S.
\newblock Reward constrained policy optimization.
\newblock \emph{arXiv preprint arXiv:1805.11074}, 2018.

\bibitem[Todorov et~al.(2012)Todorov, Erez, and Tassa]{todorov2012mujoco}
Todorov, E., Erez, T., and Tassa, Y.
\newblock Mujoco: A physics engine for model-based control.
\newblock In \emph{2012 IEEE/RSJ international conference on intelligent robots and systems}, pp.\  5026--5033. IEEE, 2012.

\bibitem[Tomlin et~al.(2000)Tomlin, Lygeros, and Sastry]{tomlin2000game}
Tomlin, C.~J., Lygeros, J., and Sastry, S.~S.
\newblock A game theoretic approach to controller design for hybrid systems.
\newblock \emph{Proceedings of the IEEE}, 88\penalty0 (7):\penalty0 949--970, 2000.

\bibitem[Trott et~al.(2019)Trott, Zheng, Xiong, and Socher]{trott2019keeping}
Trott, A., Zheng, S., Xiong, C., and Socher, R.
\newblock Keeping your distance: Solving sparse reward tasks using self-balancing shaped rewards.
\newblock \emph{Advances in Neural Information Processing Systems}, 32, 2019.

\bibitem[Vagale et~al.(2021)Vagale, Oucheikh, Bye, Osen, and Fossen]{vagale2021path}
Vagale, A., Oucheikh, R., Bye, R.~T., Osen, O.~L., and Fossen, T.~I.
\newblock Path planning and collision avoidance for autonomous surface vehicles i: a review.
\newblock \emph{Journal of Marine Science and Technology}, pp.\  1--15, 2021.

\bibitem[Xu et~al.(2021)Xu, Liang, and Lan]{xu2021crpo}
Xu, T., Liang, Y., and Lan, G.
\newblock Crpo: A new approach for safe reinforcement learning with convergence guarantee.
\newblock In \emph{International Conference on Machine Learning}, pp.\  11480--11491. PMLR, 2021.

\bibitem[Xue(2026)]{xue2024sufficient}
Xue, B.
\newblock Sufficient and necessary barrier-like conditions for safety and reach-avoid verification of stochastic discrete-time systems.
\newblock \emph{Automatica}, 187:\penalty0 112919, 2026.

\bibitem[Xue et~al.(2021)Xue, Li, Zhan, and Fr{\"a}nzle]{xue2021reach}
Xue, B., Li, R., Zhan, N., and Fr{\"a}nzle, M.
\newblock Reach-avoid analysis for stochastic discrete-time systems.
\newblock In \emph{2021 American Control Conference (ACC)}, pp.\  4879--4885. IEEE, 2021.

\bibitem[Yang et~al.(2020)Yang, Rosca, Narasimhan, and Ramadge]{yang2020projection}
Yang, T.-Y., Rosca, J., Narasimhan, K., and Ramadge, P.~J.
\newblock Projection-based constrained policy optimization.
\newblock \emph{arXiv preprint arXiv:2010.03152}, 2020.

\bibitem[Ying et~al.(2022)Ying, Zhou, Su, Yan, Chen, and Zhu]{ying2022towards}
Ying, C., Zhou, X., Su, H., Yan, D., Chen, N., and Zhu, J.
\newblock Towards safe reinforcement learning via constraining conditional value-at-risk.
\newblock \emph{arXiv preprint arXiv:2206.04436}, 2022.

\bibitem[Yu et~al.(2022)Yu, Ma, Li, and Chen]{yu2022reachability}
Yu, D., Ma, H., Li, S., and Chen, J.
\newblock Reachability constrained reinforcement learning.
\newblock In \emph{International conference on machine learning}, pp.\  25636--25655. PMLR, 2022.

\bibitem[Zhang et~al.(2020)Zhang, Vuong, and Ross]{zhang2020first}
Zhang, Y., Vuong, Q., and Ross, K.
\newblock First order constrained optimization in policy space.
\newblock \emph{Advances in Neural Information Processing Systems}, 33:\penalty0 15338--15349, 2020.

\bibitem[Zhang et~al.(2021)Zhang, You, Chen, Du, Ai, and Qu]{zhang2021safe}
Zhang, Y., You, T., Chen, J., Du, C., Ai, Z., and Qu, X.
\newblock Safe and energy-saving vehicle-following driving decision-making framework of autonomous vehicles.
\newblock \emph{IEEE Transactions on Industrial Electronics}, 69\penalty0 (12):\penalty0 13859--13871, 2021.

\bibitem[{\v{Z}}ikeli{\'c} et~al.(2023){\v{Z}}ikeli{\'c}, Lechner, Verma, Chatterjee, and Henzinger]{vzikelic2023compositional}
{\v{Z}}ikeli{\'c}, {\DJ}., Lechner, M., Verma, A., Chatterjee, K., and Henzinger, T.
\newblock Compositional policy learning in stochastic control systems with formal guarantees.
\newblock \emph{Advances in Neural Information Processing Systems}, 36:\penalty0 47849--47873, 2023.

\end{thebibliography}
